%% file: iclr2027_conference.tex
\documentclass{article}
\usepackage{iclr2027_conference,times}

\input{math_commands.tex}

\usepackage{amsmath}
\usepackage{amssymb}
\usepackage{hyperref}
\usepackage{url}
\usepackage{algorithm}
\usepackage{algpseudocode}

\usepackage{amsthm}
\makeatletter
\g@addto@macro\th@plain{\thm@notefont{\bfseries\upshape}}
\makeatother

\usepackage{graphicx}
\usepackage{booktabs}
\usepackage{multirow}
\usepackage[table,dvipsnames]{xcolor}

\usepackage{subcaption}
\newcommand{\basex}[1]{\textcolor{gray}{(#1)}}
\newcommand{\uprise}[1]{\textcolor{OliveGreen}{$\uparrow$#1}}
\newcommand{\down}[1]{\textcolor{Maroon}{$\downarrow$#1}}

\renewcommand{\uprise}[1]{%
  \makebox[2.5em][l]{\textcolor{OliveGreen}{(\ensuremath{\uparrow}#1)}}%
}

\renewcommand{\down}[1]{%
  \makebox[2.5em][l]{\textcolor{Maroon}{(\ensuremath{\downarrow}#1)}}%
}

\renewcommand{\basex}[1]{%
  \makebox[2.5em][l]{\textcolor{gray}{(#1)}}%
}

\newcommand{\cocarestatedname}{}
\newtheorem*{cocarestatedinner}{\cocarestatedname}
\newenvironment{cocarestated}[2]{%
  \renewcommand{\cocarestatedname}{Proposition~\ref{#1}}%
  \begin{cocarestatedinner}[#2]%
}{\end{cocarestatedinner}}

\title{CoCA: Criterion-Aware Credit Assignment \\for Rubric-Based Reinforcement Learning}

\author{
\begin{tabular}{@{}l@{}}
\mbox{\textbf{Guangcheng Zhu}$^{12}$%
\thanks{Equal Contribution. $\dagger$ Corresponding Authors.}%
\hphantom{$^{*}$}}\hspace{0.7em}%
\mbox{\textbf{Yingfan Ma}$^{2\mkern0.5mu*}$}\hspace{0.7em}%
\mbox{\textbf{Xing Zheng}$^{2}$}\hspace{0.7em}%
\mbox{\textbf{Haobo Wang}$^{1\dagger}$}\hspace{0.7em}%
\mbox{\textbf{Shenzhi Yang}$^{12}$}\\
\mbox{\textbf{Kai Tang}$^{12}$}\hspace{0.7em}%
\mbox{\textbf{Zhengqing Zang}$^{12}$}\hspace{0.7em}%
\mbox{\textbf{Bowen Song}$^{2\dagger}$}\hspace{0.7em}%
\mbox{\textbf{Weiqiang Wang}$^{2}$}\hspace{0.7em}%
\mbox{\textbf{Gang Chen}$^{1}$}
\end{tabular}
\vspace{0mm} \\
\textsuperscript{1} Zhejiang University \quad
\textsuperscript{2} Ant Group \\
\texttt{\small\{zhuguangcheng,wanghaobo,yangshenzhi,tk0819,zangzq,cg\}@zju.edu.cn}\\
\small\texttt{\{mayingfan.myf,feishang.zx,bowen.sbw,weiqiang.wwq\}@antgroup.com}
}

\newtheorem{proposition}{Proposition}

\newtheorem{remark}{Remark}
\newcommand{\method}{CoCA}

\iclrfinalcopy
\begin{document}

\maketitle
\lhead{Preprint}

\begin{abstract}
Rubric-based reinforcement learning (RL) evaluates responses against explicit criteria to construct rewards, offering a viable way to train large language models on open-ended tasks.
However, standard methods compress criterion-level feedback into scalar rewards and assign uniform token advantages, leaving structured rubric supervision underutilized.
Recent work seeks finer-grained guidance by treating rubrics as privileged information to derive self-distillation targets, but the resulting learning signal is not grounded in criterion outcomes and can lead to reward degradation.
This motivates us to reframe the problem as criterion-aware credit assignment, where privileged rubric information serves as a relevance cue for credit allocation rather than a source of imitation targets.
Accordingly, we propose \textbf{CoCA}, which derives criterion-level credits from group-relative outcomes and allocates them across tokens based on criterion-induced distribution shifts, yielding both outcome-grounded and fine-grained learning signals.
Empirically, CoCA achieves an average gain of \textbf{+4.2\%} over standard rubric-based RL across medicine and science benchmarks, demonstrating more effective learning on open-ended tasks.
Our code is available at \href{https://github.com/gczhu/CoCA}{https://github.com/gczhu/CoCA}.
\end{abstract}

\section{Introduction}
\label{sec:introduction}

Scaling reinforcement learning (RL) to open-ended tasks remains a central challenge in the post-training of large language models (LLMs)~\citep{rlrmsurvey,writingzero,rewardchain}.
Unlike verifiable tasks, which often permit deterministic evaluation, open-ended tasks such as medical consultation~\citep{healthbench} and scientific explanation~\citep{sciarena} admit diverse valid responses and require multidimensional quality assessment.
This makes it difficult to construct reliable and scalable reward signals.
Rubrics provide a promising approach by decomposing overall response quality into explicit, fine-grained criteria that can be assessed individually~\citep{llmrubric,checklists,rubricsurvey}.
By specifying the requirements of a high-quality response along multiple dimensions, they offer a structured and interpretable foundation for scalable supervision in open-ended tasks~\citep{rubricanchors,dece,openrubrics}.

To translate this supervision into policy updates, standard rubric-based RL~\citep{rar,drtulu,rgr} uses an LLM judge~\citep{llmjudge} to evaluate each rollout against the rubric criteria, aggregates the criterion-level outcomes into a scalar reward, and optimizes the policy with Group Relative Policy Optimization (GRPO)~\citep{grpo}.
Although this pipeline makes rubric feedback compatible with standard policy optimization, it reduces the multidimensional assessment to a single response-level learning signal.
As illustrated in the left panel of Figure~\ref{fig:motivation}, scalar aggregation can assign the same reward to responses with different criterion profiles, obscuring their distinct strengths and weaknesses.
Moreover, applying the same response-level advantage to every token may reinforce criterion-violating content when the advantage is positive and suppress criterion-satisfying content when it is negative.
Consequently, this pipeline performs fine-grained evaluation but learns from coarse response-level supervision, limiting how effectively the policy can exploit the structured feedback provided by rubrics.

This mismatch has prompted recent attempts~\citep{rgsd,rcsd,rupi} to derive token-level guidance through on-policy self-distillation (OPSD)~\citep{opsd,sdpo}.
Specifically, these methods train the policy on its own rollouts to match the next-token distribution of a self-teacher that receives rubrics as privileged context.
This distillation loss is used either as a standalone objective or as an auxiliary objective in rubric-based RL, with the expectation that such fine-grained guidance will help the policy learn more effectively from rubrics.
However, as shown in the right panel of Figure~\ref{fig:motivation}, reward declines when OPSD is optimized alone and adding it as an auxiliary loss degrades the performance of rubric-based RL.
Our theoretical analysis highlights a gap between better generation and reliable guidance: although a rubric-conditioned teacher can perform better overall, matching its predictions need not improve criterion satisfaction.
OPSD determines what to reinforce or suppress from teacher--student prediction differences rather than observed criterion outcomes, so closer imitation can come at the cost of lower reward.
This motivates us to reframe the problem as \textbf{criterion-aware credit assignment}.


Our key insight is to separate credit determination from allocation: observed outcomes determine the sign and budget of each criterion's credit, while privileged rubric information serves as a \textit{relevance cue} for allocating credit across tokens.
Based on this insight, we propose \textbf{CoCA}, a criterion-aware credit assignment framework for rubric-based RL.
Specifically, we first use group-relative criterion outcomes to decompose the response-level advantage into signed criterion-level credits, a step we call \textit{credit grounding}.
We then compute the magnitude of criterion-induced next-token distribution shifts as a proxy for token relevance.
Rather than matching the rubric-conditioned distributions as in OPSD, we use this relevance to redistribute each criterion's credit across tokens, a step we call \textit{credit resolution}.
Finally, we aggregate the resolved credits into token-level advantages, yielding outcome-grounded and fine-grained policy updates.
Our experiments in medicine and science show that CoCA outperforms all baselines, improving over standard rubric-based RL by \textbf{+4.2\%} on average.
Criterion-level analyses further show that CoCA corrects unmet criteria more effectively while better preserving those already satisfied.
Together, these results demonstrate that CoCA makes more effective use of structured rubric feedback and enables more targeted learning.

\begin{figure*}[!t]
    \centering
    \includegraphics[width=\textwidth]{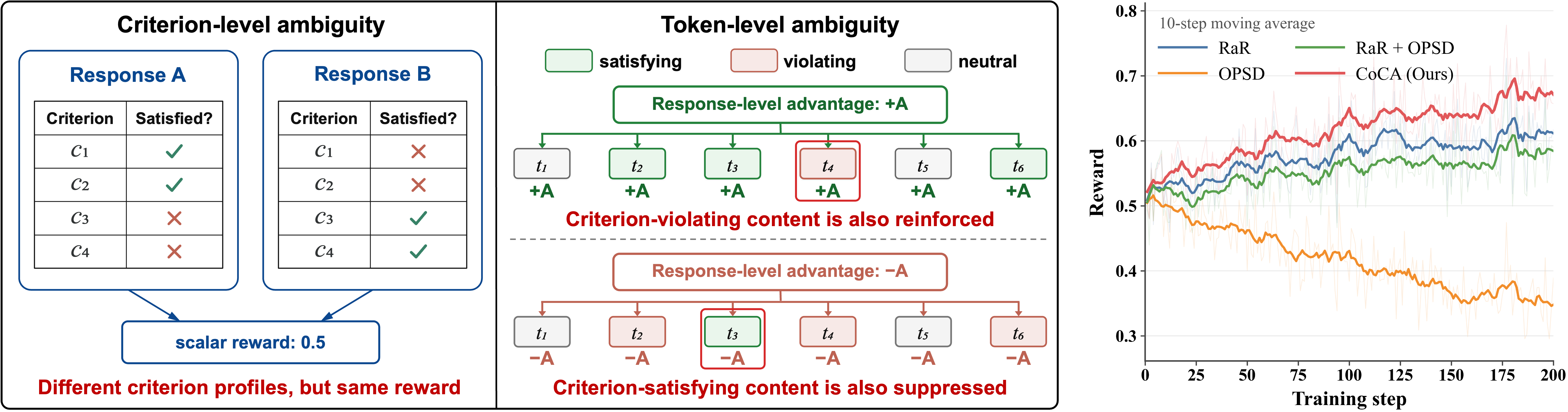}
    \caption{\small (Left) The mismatch between criterion-level evaluation and response-level learning signals in standard rubric-based RL. (Right) Training dynamics across methods on RaR-Medicine over the first 200 steps.}
    \vspace{-6pt}
    \label{fig:motivation}
\end{figure*}

\section{Preliminaries}
\label{sec:preliminaries}

\paragraph{Rubric-based RL.}
Rubrics specify response quality through a set of weighted, independently assessable criteria.
Formally, a training example $(x,\mathcal{R})\sim\mathcal{D}$ consists of a question $x$ and its associated rubric $\mathcal{R}=\{(c_j,s_j)\}_{j=1}^{M}$, where $c_j$ denotes the $j$-th evaluation criterion, $s_j$ its point value, and $M$ the number of criteria.
Following prior work~\citep{healthbench,rar}, we adopt a binary checklist-style scoring protocol and use GRPO~\citep{grpo} as the base RL algorithm.
For each question $x$, the rollout policy $\pi_{\theta_{\text{old}}}$ samples a group of $G$ responses $\{y_i\}_{i=1}^{G}$.
An LLM judge evaluates each response $y_i$ against each criterion $c_j$, yielding $z_{ij}\!\in\!\{0,1\}$, with $z_{ij}\!=\!1$ if $y_i$ meets $c_j$.
These criterion-level judgments are aggregated into a scalar reward $R_i=\sum_j s_jz_{ij}/\sum_{j:s_j>0} s_j$.
To avoid assigning disproportionate weight to rollout groups with low reward variance, we follow Dr. GRPO~\citep{drgrpo} and omit normalization by the within-group reward standard deviation.
We thus define the group-relative advantage as \(A_i = R_i - \bar R\), where \(\bar R = \frac{1}{G}\sum_{k=1}^{G} R_k\) is the mean reward within the group.
The policy $\pi_\theta$ is then optimized using the GRPO loss:
\begin{equation}
\mathcal{L}_{\mathrm{GRPO}} =
-\mathbb{E}_{x\sim\mathcal{D},\{y_i\}_{i=1}^G\sim\pi_{\theta_{\text{old}}}\!(\cdot|x)}
\!\left[\textstyle
\frac{1}{G}\!\sum_{i=1}^{G}\!
\frac{1}{|y_i|}\!\sum_{t=1}^{|y_i|}\!
\min\bigl(
\rho_{i,t}A_i,
\operatorname{clip}(\rho_{i,t},1\!-\!\epsilon,1\!+\!\epsilon)A_i
\bigr)\right],
\label{eq:grpo}
\end{equation}
where $\rho_{i,t}=\pi_\theta(y_{i,t}\!\mid\! x,y_{i,\!<t})/
\pi_{\theta_{\text{old}}}(y_{i,t}\!\mid\! x,y_{i,\!<t})$ is the importance ratio and $\epsilon$ is the clipping factor.
Hence, criterion-level evaluations are reduced to a response-level advantage shared by all tokens, without explicitly using each criterion's outcome to guide targeted updates to the relevant tokens.

\paragraph{Rubric-conditioned OPSD.}
Beyond their role in reward computation, rubrics can serve as privileged information $\xi$ for token-level supervision through OPSD~\citep{opsd,sdpo}.
Formally, for an on-policy rollout $y_i$, let $h_{i,t}=(x,y_{i,<t})$ and $h_{i,t}(\xi)=(x,\xi,y_{i,<t})$ denote the ordinary and privileged contexts at position $t$, respectively.
OPSD trains the policy by matching its next-token distribution under $h_{i,t}$ to that under $h_{i,t}(\xi)$.
We consider the reverse-KL objective:
\begin{equation}
\mathcal{L}_{\mathrm{OPSD}}
=
\mathbb{E}_{x\sim\mathcal{D},\,y_i\sim\pi_\theta(\cdot\mid x)}
\left[
\tfrac{1}{|y_i|}
\textstyle\sum_{t=1}^{|y_i|}
D_{\mathrm{KL}}
\left(
\pi_\theta(\cdot\mid h_{i,t})
\,\middle\|\,
\operatorname{sg}\!\left[
\pi_\theta(\cdot\mid h_{i,t}(\xi))
\right]
\right)
\right].
\label{eq:opsd}
\end{equation}
Here, $\operatorname{sg}[\cdot]$ denotes the stop-gradient operator, and the distribution under $h_{i,t}(\xi)$ serves as a detached self-teacher.
In the rubric-conditioned setting, $\xi$ is instantiated as either the full criterion set $\{c_j\}_{j=1}^{M}$ or a selected subset.
The resulting self-teacher translates these quality requirements into target next-token distributions for policy optimization.


\section{Credit Assignment in Rubric-Based Policy Optimization}
\label{sec:credit-assignment}
Rubric evaluation yields criterion-level outcomes that typically depend on different token decisions within a response.
Accordingly, translating this structured feedback into targeted policy updates involves two complementary components:
\emph{credit grounding} determines a signed credit budget for each criterion from observed outcomes, whereas \emph{credit resolution} allocates these budgets across relevant tokens while preserving their values.
Formally, for response $y_i$, let $\mathbf z_i=(z_{i1},\ldots,z_{iM})^\top$.
Credit grounding uses a rule $g$ to map these outcomes to a vector of signed criterion credits:
\begin{equation}
    \mathbf a_i
    =
    g(\mathbf z_i;\mathcal R,\mathcal B)
    =
    (A_{i1},\ldots,A_{iM})^\top,
    \label{eq:credit-grounding}
\end{equation}
where $A_{ij}$ is the credit budget for criterion $c_j$, and $\mathcal B$ denotes optional reference statistics derived from evaluation outcomes.
Given these budgets, credit resolution determines their allocation across token positions.
We represent this allocation by a criterion--token credit matrix $\mathbf C_i\in\mathbb R^{M\times T_i}$, where $T_i\!=\!|y_i|$ is the response length and $[\mathbf C_i]_{j,t}$ denotes the credit assigned to token $t$ for criterion $c_j$.
We consider allocations that preserve each criterion budget as its mean credit over tokens:
\begin{equation}
    \mathcal C_i(\mathbf a_i)
    =
    \left\{
        \mathbf C\in\mathbb R^{M\times T_i}
        :
        \tfrac{1}{T_i}\mathbf C\mathbf 1_{T_i}
        =
        \mathbf a_i
    \right\},
    \qquad
    \boldsymbol\alpha_i(\mathbf C)
    =
    \mathbf C^\top\mathbf 1_M,
    \label{eq:credit-resolution-space}
\end{equation}
where $\mathcal C_i(\mathbf a_i)$ denotes the set of grounding-preserving resolutions and $\boldsymbol\alpha_i(\mathbf C)\in\mathbb R^{T_i}$ is the resulting token-level credit vector.
We use $\mathbf C_i^*\in\mathcal C_i(\mathbf a_i)$ to denote an oracle resolution of the criterion budgets.

\subsection{Rubric-Based RL: Outcome-Grounded but Under-Resolved}
\label{sec:rubric-rl-resolution}
We first interpret standard rubric-based RL within this framework.
Let $\bar z_j=\frac{1}{G}\sum_{k=1}^{G}z_{kj}$ denote the mean score for criterion $j$ across the rollout group.
By the linearity of group-relative centering, the response-level advantage $A_i$ can be decomposed exactly into criterion-level contributions, each determined by a criterion's weight and its score relative to the group mean:
\begin{equation}
    A_{ij}
    =\frac{s_j(z_{ij}-\bar z_j)}
    {\sum_{\ell:s_\ell>0}s_\ell},
    \qquad
    A_i=\sum_{j=1}^{M}A_{ij}.
    \label{eq:criterion-advantage}
\end{equation}
Here, the grounding rule $g$ uses $\{\bar z_j\}_{j=1}^{M}$ as the reference statistics $\mathcal B$.
Each budget thus reflects the response's weighted performance relative to the group on the corresponding criterion.
However, standard rubric-based RL uses the aggregate advantage
$A_i\!=\!\mathbf 1_M^\top\mathbf a_i$ and assigns it uniformly to every token.
This is equivalent to the allocation $\mathbf a_i\mathbf 1_{T_i}^{\top}\!\in\!\mathcal C_i(\mathbf a_i)$, which preserves the criterion budgets but does not account for which tokens are relevant to each criterion.
To formalize this loss of resolution, let $\mathcal U_i=\{\mathbf u\mathbf 1_{T_i}^{\top}:\mathbf u\in\mathbb R^M\}$ denote the subspace of criterion--token credit matrices that are uniform across tokens, and let $\Pi_{\mathcal U_i}$ denote the orthogonal projection onto this subspace.

\begin{proposition}[Credit resolution collapse in rubric-based RL]
\label{prop:rubric-rl-collapse}
For any oracle resolution $\mathbf C_i^*\in\mathcal C_i(\mathbf a_i)$, the token credit induced by rubric-based RL and the corresponding projection error satisfy
\begin{equation}
    \left(\mathbf 1_M^\top\mathbf a_i\right)\mathbf 1_{T_i}
    =
    A_i\mathbf 1_{T_i}
    =
    \boldsymbol\alpha_i
    \!\left(\Pi_{\mathcal U_i}(\mathbf C_i^*)\right),
    \quad
    \tfrac{1}{T_i}
    \left\|
        \mathbf C_i^*-\Pi_{\mathcal U_i}(\mathbf C_i^*)
    \right\|_F^2
    =
    {\textstyle\sum_{j=1}^{M}}
    \operatorname{Var}_{t}
    \left([\mathbf C_i^*]_{j,t}\right).
    \label{eq:rubric-rl-collapse}
\end{equation}
Moreover, the linear map $\mathbf a\mapsto(\mathbf 1_M^\top\mathbf a)\mathbf 1_{T_i}$ has rank one and kernel $\mathbf 1_M^\perp$.
Consequently, any nonzero criterion budget vector $\mathbf a_i\in\mathbf 1_M^\perp$ yields a zero token-credit vector, while any position $t$ satisfying $A_i[\boldsymbol\alpha_i(\mathbf C_i^*)]_t<0$ is assigned credit whose sign is opposite to its oracle token credit.
\end{proposition}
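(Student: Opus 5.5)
The plan is to compute the orthogonal projection $\Pi_{\mathcal U_i}$ explicitly and then read off every claim from that formula. Work in $\mathbb R^{M\times T_i}$ with the Frobenius inner product $\langle \mathbf X,\mathbf Y\rangle=\operatorname{tr}(\mathbf X^\top\mathbf Y)$. The candidate projection is row-averaging,
\begin{equation*}
\Pi_{\mathcal U_i}(\mathbf C)=\tfrac{1}{T_i}\mathbf C\mathbf 1_{T_i}\mathbf 1_{T_i}^\top .
\end{equation*}
Two facts need to be checked. First, this matrix lies in $\mathcal U_i$, with $\mathbf u=\tfrac1{T_i}\mathbf C\mathbf 1_{T_i}$. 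Second, the residual $\mathbf C-\Pi_{\mathcal U_i}(\mathbf C)$ has zero row sums, so $\langle \mathbf C-\Pi_{\mathcal U_i}(\mathbf C),\mathbf v\mathbf 1_{T_i}^\top\rangle=\mathbf v^\top(\mathbf C-\Pi_{\mathcal U_i}(\mathbf C))\mathbf 1_{T_i}=0$ for every $\mathbf v\in\mathbb R^M$. Together these characterize the orthogonal projection.

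\textbf{Step 1: the first identity.} Take $\mathbf C_i^*\in\mathcal C_i(\mathbf a_i)$. The grounding constraint $\tfrac1{T_i}\mathbf C_i^*\mathbf 1_{T_i}=\mathbf a_i$ gives $\Pi_{\mathcal U_i}(\mathbf C_i^*)=\mathbf a_i\mathbf 1_{T_i}^\top$, which is exactly the uniform allocation of rubric-based RL. Hence
\begin{equation*}
\boldsymbol\alpha_i\bigl(\Pi_{\mathcal U_i}(\mathbf C_i^*)\bigr)=\mathbf 1_{T_i}\mathbf a_i^\top\mathbf 1_M=(\mathbf 1_M^\top\mathbf a_i)\mathbf 1_{T_i},
\end{equation*}
and this equals $A_i\mathbf 1_{T_i}$ by Eq.~\eqref{eq:criterion-advantage}.

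\textbf{Step 2: the error identity.} Row $j$ of $\mathbf C_i^*-\Pi_{\mathcal U_i}(\mathbf C_i^*)$ is $([\mathbf C_i^*]_{j,t}-A_{ij})_t$, and $A_{ij}$ is the row mean. So $\tfrac1{T_i}$ times the squared norm of row $j$ is the population variance over $t$. Summing over $j$ gives the Frobenius identity. The only convention to fix here is that $\operatorname{Var}_t$ means the $1/T_i$ (uniform over positions) variance.

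\textbf{Step 3: rank and kernel.} The map $\mathbf a\mapsto(\mathbf 1_M^\top\mathbf a)\mathbf 1_{T_i}$ factors as the nonzero functional $\mathbf 1_M^\top$ followed by multiplication by the nonzero vector $\mathbf 1_{T_i}$. Its image is therefore $\operatorname{span}(\mathbf 1_{T_i})$, so its rank is one, and its kernel is $\{\mathbf a:\mathbf 1_M^\top\mathbf a=0\}=\mathbf 1_M^\perp$. A nonzero $\mathbf a_i\in\mathbf 1_M^\perp$ then gives $A_i=0$, so the token credit $A_i\mathbf 1_{T_i}$ is zero. This happens even though the oracle $\mathbf C_i^*$ may have nonzero token credits.

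\textbf{Step 4: the sign-flip claim.} By Step 1, rubric-based RL assigns $A_i$ to position $t$, while the oracle assigns $[\boldsymbol\alpha_i(\mathbf C_i^*)]_t$. If their product is negative, both are nonzero and have opposite signs, so the claim is immediate.

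No step is a real obstacle. The only point that needs care is verifying that row-averaging is the \emph{orthogonal} projection for the Frobenius inner product, as opposed to some other projection onto $\mathcal U_i$; the orthogonality check above settles this.
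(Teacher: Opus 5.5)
Your proposal is correct and follows the paper's proof essentially step for step. Both identify $\Pi_{\mathcal U_i}$ as row-averaging, use the budget constraint to get $\Pi_{\mathcal U_i}(\mathbf C_i^*)=\mathbf a_i\mathbf 1_{T_i}^\top$, derive the variance identity row by row, and treat the budget-to-token map as the rank-one outer product $\mathbf 1_{T_i}\mathbf 1_M^\top$. The only cosmetic difference is how the projection is certified: you show the zero-row-sum residual is orthogonal to $\mathcal U_i$, whereas the paper expands $\|\mathbf C-\mathbf u\mathbf 1_{T_i}^\top\|_F^2$ and minimizes over $\mathbf u$, and these two arguments are equivalent.
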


Proposition~\ref{prop:rubric-rl-collapse} shows that criterion aggregation retains only the sum $A_i$, obscuring the underlying pattern of strengths and weaknesses and allowing opposing criterion credits to cancel.
Uniform allocation further removes within-criterion variation across tokens, giving every token the same credit $A_i$ even when the oracle calls for reinforcing some tokens and suppressing others.
The resulting learning signal remains outcome-grounded but leaves criterion-specific token credit under-resolved.

\subsection{OPSD: Fine-Grained but Not Necessarily Outcome-Grounded}
\label{sec:opsd-grounding}

In contrast, OPSD provides token-specific guidance by matching the policy to a rubric-conditioned self-teacher.
Although this teacher may generate better responses on its own, its value for distillation depends on the matching updates it induces.
Yet, these updates are determined by teacher--student prediction differences without accounting for which rubric criteria the sampled response satisfies or violates.
Improving the match to the teacher need not improve criterion satisfaction and may instead reduce reward.
To examine this gap locally, let $r_{i,t}(a;\xi)=\log\pi_\theta(a\!\mid\! h_{i,t}(\xi))\!-\!\log\pi_\theta(a\!\mid\! h_{i,t})$ denote the teacher-induced matching signal at token context $h_{i,t}$, and define $J(\theta;h)=\mathbb E_{a\sim\pi_\theta(\cdot\mid h)}[R(a)]$.


\begin{proposition}[OPSD matching can reduce reward]
\label{prop:opsd-grounding}
Consider the next-token decision at $h_{i,t}$ as a one-step contextual bandit with finite action space $\mathcal V$, a reward function $R:\mathcal V\to\mathbb R$, and a tabular softmax policy.
Let $\theta'$ be obtained by a natural-gradient descent step of size $\eta>0$ on the KL term of Eq.~(\ref{eq:opsd}), using the local Fisher information and holding the self-teacher fixed.
Then, as $\eta\to0$,
\begin{equation}
    J(\theta';h_{i,t})-J(\theta;h_{i,t})
    =
    \eta\,
    \operatorname{Cov}_{a\sim\pi_\theta(\cdot\mid h_{i,t})}
    \left(
        R(a),
        r_{i,t}(a;\xi)
    \right)
    +
    \mathcal O(\eta^2).
    \label{eq:opsd-grounding}
\end{equation}
When $|\mathcal V|\geq3$, there exist instances for which $J(\theta;h_{i,t}(\xi))\!>\!J(\theta;h_{i,t})$ while the covariance in Eq.~(\ref{eq:opsd-grounding}) is strictly negative, implying $J(\theta';h_{i,t})\!<\!J(\theta;h_{i,t})$ for sufficiently small $\eta>0$.
\end{proposition}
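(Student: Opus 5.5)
The plan has two parts: first derive the first-order formula under the tabular softmax parameterization, then build an explicit three-action example.

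\textbf{First-order formula.} Write $\pi=\pi_\theta(\cdot\mid h_{i,t})$ with logits $\theta\in\mathbb R^{\mathcal V}$, and let $q=\pi_\theta(\cdot\mid h_{i,t}(\xi))$ be the detached teacher. Then $r(a)=\log q(a)-\log\pi(a)$, and the KL term is $L(\theta)=D_{\mathrm{KL}}(\pi\|q)=\mathbb E_\pi[-r]$. The score is $\nabla_\theta\log\pi(a)=e_a-\pi$, and its Fisher matrix is $F=\mathrm{diag}(\pi)-\pi\pi^\top$. The KL gradient is
\[
\nabla L=\mathbb E_\pi\bigl[(\log\pi-\log q)(e_a-\pi)\bigr]=\bigl(\pi(a)\,(-r(a)+\mathbb E_\pi[r])\bigr)_a .
\]
This is $F$ applied to $-r$, since $F v=\pi\odot(v-\mathbb E_\pi v)$. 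The natural-gradient descent step is therefore $\theta'=\theta-\eta F^{+}\nabla L=\theta+\eta\,(r+c\mathbf 1)$. Logits are defined only up to constant shifts, so this step is unambiguous whichever pseudo-inverse is used.

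Next, $\nabla_\theta J=\mathbb E_\pi[R(a)(e_a-\pi)]=F R$. A first-order Taylor expansion gives
\[
J(\theta')-J(\theta)=\eta\,R^\top F r+\mathcal O(\eta^2)=\eta\operatorname{Cov}_\pi(R,r)+\mathcal O(\eta^2).
\]
The remainder is uniform because $J$ is smooth in the logits on a finite space.

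\textbf{Counterexample.} I take $|\mathcal V|=3$ with $R=(1,0,-1)$ on actions $a_1,a_2,a_3$. The student $\pi$ should put substantial mass on $a_3$. The teacher $q$ should increase $J$ only modestly by moving mass into $a_2$, while keeping the ratio $q/\pi$ somewhat larger on $a_3$ than on $a_1$; the latter makes $r$ anticorrelated with $R$. For instance, let $\pi=(0.4,0.2,0.4)$, so $J(\theta;h)=0$, and $q=(0.2,0.6,0.2)$. Then $J$ under $q$ is $0$, not strictly larger, so the example must be perturbed, say $q=(0.25,0.55,0.2)$, giving $J_q=0.05>0$.

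Now $r=(\log 0.625,\log 2.75,\log 0.5)$. Under $\pi$ we have $\mathbb E R=0$, so
\[
\operatorname{Cov}(R,r)=\mathbb E_\pi[Rr]=0.4\log 0.625-0.4\log 0.5=0.4\log 1.25>0,
\]
which has the wrong sign. I need the opposite: $q/\pi$ larger on the bad action than on the good one, while $J_q$ is still larger. Taking $q=(0.2,0.7,0.1)$ fails for the same reason. The resolution is that a single action should carry a large ratio while its reward is below the mean but not the worst.

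A cleaner design is $R=(1,0.9,0)$ with $\pi=(0.5,0.01,0.49)$ and $q=(0.3,0.6,0.1)$. Here $J_\pi=0.509$ and $J_q=0.84$, so the teacher is better. The values $r=(\log 0.6,\log 60,\log 0.204)$ put a huge ratio on $a_2$, but $a_2$ has tiny student mass. With $\mathbb E_\pi R=0.509$, the sum $\sum\pi(a)(R(a)-\bar R)r(a)$ is dominated by
\[
0.5\cdot0.491\cdot\log0.6+0.49\cdot(-0.509)\cdot\log0.204+0.01\cdot0.391\cdot\log60 .
\]
This is about $-0.125+0.396+0.016>0$, still positive.

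The genuine mechanism needed is for the teacher to move mass away from the current best action toward a mid-reward action, while also leaving the worst action. So I take $R=(1,0.8,0)$, $\pi=(0.8,0.1,0.1)$, and $q=(0.3,0.69,0.01)$. Then $J_\pi=0.88$ and $J_q=0.852$, which is slightly lower; I would tune the values, e.g.\ $R=(1,0.95,0)$ with $J_\pi=0.895$ and $J_q\approx0.9555$. The covariance then evaluates as
\[
0.8\cdot0.105\log0.375+0.1\cdot0.055\log6.9+0.1\cdot(-0.895)\log0.1\approx-0.082+0.011+0.206>0 .
\]
The $a_3$ term dominates, so I will instead keep $q(a_3)=\pi(a_3)$. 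Taking $q=(0.3,0.6,0.1)$ makes the $a_3$ term vanish, leaving $0.8\cdot0.105\log0.375+0.1\cdot0.055\log6\approx-0.082+0.010<0$, with $J_q=0.87<J_\pi=0.895$.

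The main obstacle is therefore this tension, which I would resolve by giving the student a fourth-action-free three-point configuration and searching a one-parameter family. Set $R=(1,1-\delta,0)$, $\pi=(1-2p,p,p)$, and $q=(1-2p-m,\,p+m,\,p)$. Then $J_q-J_\pi=-m\delta<0$ always, so the teacher must also move some mass from $a_3$. Taking $q(a_3)=p-\varepsilon$ gives $J_q-J_\pi\approx\varepsilon(1-\delta)-m\delta$, which is positive when $\varepsilon(1-\delta)>m\delta$. Its covariance contribution is about $-\varepsilon(1-\delta)\,\bar R$ up to logarithmic factors, which is controlled by choosing $\varepsilon$ small. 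The final step is to check that some numeric point in this family satisfies both strict inequalities simultaneously, by expanding the covariance to first order in $(m,\varepsilon)$ around $q=\pi$. That expansion gives
\[
\operatorname{Cov}\approx m\,(R_2-R_1)+\varepsilon\,(\bar R-R_3)+\text{(higher order)} .
\]
Meanwhile $J_q-J_\pi=m(R_2-R_1)+\varepsilon(R_1-R_3)$, exactly. Since $R_1>\bar R$, a direction with $\varepsilon(\bar R-R_3)<m\delta<\varepsilon(R_1-R_3)$ achieves both inequalities.
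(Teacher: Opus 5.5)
Your first-order formula is correct and essentially the paper's. You Taylor-expand $J$ in the logits using $\nabla J=FR$ and $F^{+}Fr\in r+\operatorname{span}(\mathbf 1)$; the paper instead writes the exact tilted update $p'_a\propto p_a e^{\eta r_a}$ and expands that.

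The gap is in the existence part. None of your explicit candidates satisfies both $\mathbb E_q[R]>\mathbb E_\pi[R]$ and $\operatorname{Cov}_\pi(R,r)<0$, and the perturbative argument you fall back on rests on a wrong expansion. With $x_a=q_a/\pi_a-1$ and $\bar R=\mathbb E_\pi[R]$, one has the exact identity
\[
\operatorname{Cov}_\pi(R,r)=\bigl(\mathbb E_q[R]-\mathbb E_\pi[R]\bigr)+\sum_{a}\pi_a(R_a-\bar R)\bigl(\log(1+x_a)-x_a\bigr),
\]
and the last sum is $\mathcal O(\|q-\pi\|^2)$. So to first order in $(m,\varepsilon)$ the covariance is $m(R_2-R_1)+\varepsilon(R_1-R_3)$, which is exactly $J_q-J_\pi$. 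Your coefficient $\varepsilon(\bar R-R_3)$ drops the term $\varepsilon(R_1-\bar R)$ contributed by the action that receives the $\varepsilon$ mass. Whichever action receives it, the $\bar R$ contributions cancel because $\sum_a(q_a-\pi_a)=0$. The window $\varepsilon(\bar R-R_3)<m\delta<\varepsilon(R_1-R_3)$ is therefore empty, and no argument that is first order in the teacher perturbation can separate the two signs.

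The effect comes entirely from the concavity correction. That correction is negative when large relative changes $|x_a|$ sit on actions with $R_a>\bar R$ and small ones sit on actions with $R_a<\bar R$. You therefore need one of two things.

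\emph{A genuinely second-order construction.} Take $x=sy+\mathcal O(s^2)$ with $y$ reward-neutral, i.e.\ $\sum_a\pi_ay_a=\sum_a\pi_aR_ay_a=0$, and with $Q=\sum_a\pi_a(R_a-\bar R)y_a^2>0$; such a $y$ exists, e.g.\ for the $R$ and $\pi$ below. Add a reward-improving correction giving $J_q-J_\pi=\gamma s^2$ with $0<\gamma<Q/2$. Then the covariance is $s^2(\gamma-Q/2)+\mathcal O(s^3)<0$ for small $s$.

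\emph{An explicit non-perturbative instance.} The paper uses $R=(1,\tfrac12,0)$, $\pi=(\tfrac1{20},\tfrac9{20},\tfrac12)$, $q=(\tfrac3{10},\tfrac1{20},\tfrac{13}{20})$, where the teacher sharply cuts the probability of an above-mean action. This gives $\mathbb E_q[R]=\tfrac{13}{40}>\tfrac{11}{40}=\mathbb E_\pi[R]$ but $\operatorname{Cov}_\pi(R,r)=(29\log6-81\log9-110\log\tfrac{13}{10})/800<0$.

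There are also two smaller omissions. First, note that the instance is realizable by a tabular softmax: use logits $\log\pi$ at $h_{i,t}$ and $\log q$ at $h_{i,t}(\xi)$, chosen independently. Second, cover every $|\mathcal V|>3$, e.g.\ by splitting one action into copies with the same reward and the same ratio $q_a/\pi_a$; this leaves both expectations and the covariance unchanged.
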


Thus, the first-order reward change depends on how the matching signal aligns with reward under the student policy.
Intuitively, the local matching update favors actions with larger teacher--student log-probability differences. 
The teacher's higher expected reward reflects the quality of its overall action distribution, but does not guarantee that these relative preferences align with reward.
When this alignment is negative, a sufficiently small matching step reduces the local distillation loss while lowering expected reward.
This local failure mode provides a plausible explanation for the reward degradation observed with OPSD in Figure~\ref{fig:motivation}.


\begin{remark}
\label{rem:loss-combination}
Proposition~\ref{prop:opsd-grounding} also helps explain why introducing an auxiliary OPSD loss can undermine rubric-based RL.
The added teacher-matching signal lacks outcome grounding and may counteract reward-driven learning, while leaving the original coarse credit allocation unchanged.
We therefore seek to refine the allocation of outcome-grounded credit rather than add an independent imitation objective.
Accordingly, privileged rubric information should serve as a relevance cue rather than a source of imitation targets, guiding token-level allocation while observed outcomes determine each criterion's signed credit budget.
\end{remark}

\begin{figure*}[!t]
    \centering
    \includegraphics[width=\textwidth]{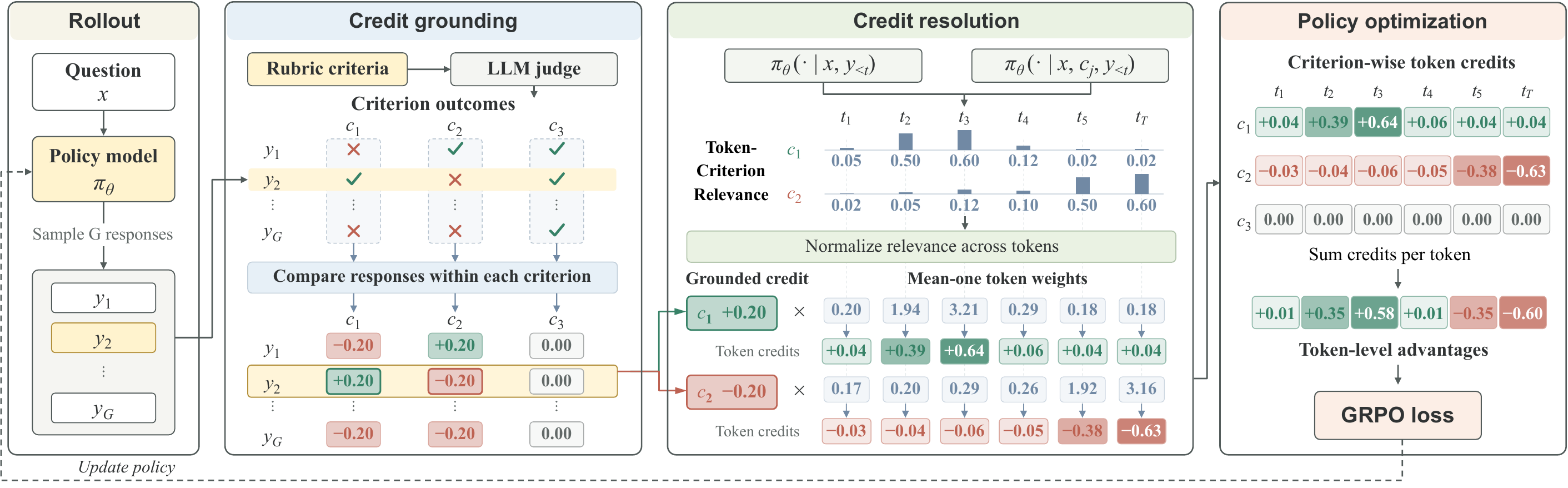}
    \caption{\small Overview of CoCA. Sampled responses are first evaluated against a shared rubric to derive signed criterion credits through within-group comparisons. Guided by relevance cues from criterion-conditioned distribution shifts, these credits are then allocated across tokens while preserving each criterion's signed budget. Finally, the resolved credits are aggregated into token-level advantages for GRPO optimization.}
    \label{fig:method}
    \vspace{-6pt}
\end{figure*}

\section{\method{}: Criterion-Aware Credit Assignment}
\label{sec:method}

Building on this principle, we propose \method{}, a criterion-aware credit assignment framework for rubric-based RL.
As illustrated in Figure~\ref{fig:method}, \method{} assigns credit in two stages.
\emph{Credit grounding} determines each criterion's signed contribution to a response via within-group comparisons of rubric outcomes, while \emph{credit resolution} allocates that contribution to tokens based on token--criterion relevance.
These stages provide criterion-aware token-level learning signals for GRPO optimization.

\subsection{Token-Level Advantage Construction}
\label{sec:credit-resolution}

\paragraph{Credit grounding.}
We instantiate the grounding rule $g$ with the group-relative decomposition in Eq.~(\ref{eq:criterion-advantage}).
For each response $y_i$, this yields signed criterion budgets $A_{ij}$ satisfying $\sum_j A_{ij}=A_i$.

\paragraph{Credit resolution.}
To resolve each budget $A_{ij}$, we use criterion $c_j$ as privileged information and compare the frozen rollout policy's next-token distributions under $h_{i,t}$ and $h_{i,t}(c_j)$.
We collect the relevance scores in $\mathbf d_{ij}=(d_{ij,1},\ldots,d_{ij,T_i})^\top$, where
\begin{equation}
    d_{ij,t}
    =
    \operatorname{sg}\!\left[
        D_{\mathrm{JS}}
        \left(
            \pi_{\theta_{\mathrm{old}}}(\cdot\mid h_{i,t})
            \,\middle\|\,
            \pi_{\theta_{\mathrm{old}}}(\cdot\mid h_{i,t}(c_j))
        \right)
    \right].
    \label{eq:criterion-relevance}
\end{equation}
Jensen--Shannon divergence provides a symmetric, bounded measure of how strongly criterion conditioning changes the next-token distribution.
We use this sensitivity as an unsigned relevance cue, while retaining $A_{ij}$ as the outcome-grounded credit budget.
We then transform these scores into positive token weights with mean one:
\begin{equation}
    \boldsymbol\omega_{ij}
    =
    T_i\operatorname{softmax}(\beta\mathbf d_{ij}),
    \qquad
    \tfrac{1}{T_i}
    \mathbf 1_{T_i}^{\top}\boldsymbol\omega_{ij}
    =
    1.
    \label{eq:criterion-token-weight}
\end{equation}
The parameter $\beta\geq0$ controls allocation concentration.
Multiplying $A_{ij}$ by these weights preserves its sign and ensures that the mean token credit equals $A_{ij}$.
For efficiency, we restrict nonuniform allocation to
$\mathcal S_i=\operatorname{TopK}_{j:A_{ij}\neq0}(|A_{ij}|,k)$,
which contains up to $k$ criteria with the largest credit magnitudes.
These criteria capture the response's most pronounced strengths and weaknesses relative to the rollout group.
We compute relevance only for the selected criteria and set $\boldsymbol\omega_{ij}=\mathbf 1_{T_i}$ for $j\notin\mathcal S_i$.
This requires at most $k$ additional criterion-conditioned forward passes per response.

\paragraph{Aggregation and policy optimization.}
Finally, we form the criterion--token credits and aggregate them across criteria to obtain token-level advantages:
\begin{equation}
    [\widehat{\mathbf C}_i]_{j,t}
    =
    A_{ij}[\boldsymbol\omega_{ij}]_t,
    \qquad
    \widehat A_{i,t}
    =
    [\boldsymbol\alpha_i(\widehat{\mathbf C}_i)]_t
    =
    A_i
    +
    \textstyle\sum_{j\in\mathcal S_i}
    A_{ij}
    \left(
        [\boldsymbol\omega_{ij}]_t-1
    \right).
    \label{eq:CoCA-credit}
\end{equation}
When opposing criterion budgets are allocated differently across tokens, the resulting advantages can remain nonzero even if $A_i=0$.
We then optimize the policy using $\widehat A_{i,t}$ in place of $A_i$ in Eq.~(\ref{eq:grpo}), without introducing an auxiliary teacher-matching loss.

\subsection{Grounding Preservation and Resolution Improvement}
\label{sec:CoCA-properties}

We examine when criterion-induced relevance reduces credit allocated to irrelevant tokens compared with uniform allocation.
For each selected criterion $c_j$, let
$\varnothing\neq\mathcal T_{ij}\subsetneq\{1,\ldots,T_i\}$
denote its relevant token positions, specified without using the relevance scores.
Let $\Delta_{ij}$ denote the mean score on $\mathcal T_{ij}$ minus that on its complement, and let
$b_{ij}=\max_t d_{ij,t}-\min_t d_{ij,t}$ denote the score range.

\begin{proposition}[Grounding-preserving resolution improvement]
\label{prop:CoCA-refinement}
For any $\beta\geq0$, CoCA preserves each criterion's budget and the sign of its token-level contributions, with
$\widehat{\mathbf C}_i(\beta)\in\mathcal C_i(\mathbf a_i)$.
For each selected criterion $j\in\mathcal S_i$ with $\Delta_{ij}>0$, the following holds whenever
$0<\beta<8\Delta_{ij}/b_{ij}^2$:
\begin{equation}
    \textstyle
    \frac{1}{T_i}
    \sum_{t\notin\mathcal T_{ij}}
    \left|[\widehat{\mathbf C}_i(\beta)]_{j,t}\right|
    <
    |A_{ij}|
    \left(1-\frac{|\mathcal T_{ij}|}{T_i}\right).
    \label{eq:CoCA-refinement}
\end{equation}
The right-hand side gives the same quantity under uniform allocation.
\end{proposition}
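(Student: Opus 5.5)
The first claim follows directly from the construction in Eq.~(\ref{eq:criterion-token-weight}) and Eq.~(\ref{eq:CoCA-credit}); the second reduces to a comparison of softmax masses, which I would control with Jensen's inequality and Hoeffding's lemma.

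\emph{Budget and sign preservation.} For $j\in\mathcal S_i$ the weights $\boldsymbol\omega_{ij}=T_i\operatorname{softmax}(\beta\mathbf d_{ij})$ are strictly positive and satisfy $\frac{1}{T_i}\mathbf 1_{T_i}^\top\boldsymbol\omega_{ij}=1$ for every $\beta\ge0$. For $j\notin\mathcal S_i$ we have $\boldsymbol\omega_{ij}=\mathbf 1_{T_i}$. Hence row $j$ of $\widehat{\mathbf C}_i(\beta)$ has mean $A_{ij}\cdot 1=A_{ij}$, so $\frac{1}{T_i}\widehat{\mathbf C}_i\mathbf 1_{T_i}=\mathbf a_i$, which is the membership $\widehat{\mathbf C}_i(\beta)\in\mathcal C_i(\mathbf a_i)$ from Eq.~(\ref{eq:credit-resolution-space}). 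Positivity of $[\boldsymbol\omega_{ij}]_t$ gives $\operatorname{sign}([\widehat{\mathbf C}_i]_{j,t})=\operatorname{sign}(A_{ij})$.

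\emph{Reduction to a softmax mass.} Fix $j\in\mathcal S_i$ with $\Delta_{ij}>0$. Write $\mathcal T=\mathcal T_{ij}$, $\mathcal T^c$ for its complement, $n=|\mathcal T|$ and $n^c=T_i-n$ (both positive). Because the weights are positive,
\[
\tfrac{1}{T_i}\textstyle\sum_{t\notin\mathcal T}|[\widehat{\mathbf C}_i]_{j,t}| = |A_{ij}|\,p(\beta), \qquad p(\beta)=\frac{\sum_{t\in\mathcal T^c}e^{\beta d_{ij,t}}}{\sum_{t}e^{\beta d_{ij,t}}}.
\]
Since $p(0)=n^c/T_i=1-n/T_i$, the right-hand side of Eq.~(\ref{eq:CoCA-refinement}) is the uniform case. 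If $A_{ij}=0$ the strict inequality cannot hold, but this case does not arise because $\mathcal S_i$ contains only criteria with $A_{ij}\neq0$. It therefore suffices to show $p(\beta)<p(0)$. Equivalently, since $p\mapsto p/(1-p)$ is increasing, it suffices to show the odds satisfy
\[
\log\frac{\tfrac{1}{n^c}\sum_{t\in\mathcal T^c}e^{\beta d_{ij,t}}}{\tfrac{1}{n}\sum_{t\in\mathcal T}e^{\beta d_{ij,t}}}<0.
\]

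\emph{Key estimate.} Let $m_{\rm in}$ and $m_{\rm out}$ be the mean scores on $\mathcal T$ and $\mathcal T^c$, so that $\Delta_{ij}=m_{\rm in}-m_{\rm out}$. Jensen's inequality gives a lower bound on the denominator:
\[
\log\tfrac{1}{n}\textstyle\sum_{t\in\mathcal T}e^{\beta d_{ij,t}}\ge\beta m_{\rm in}.
\]
Hoeffding's lemma, applied to the uniform distribution on $\{d_{ij,t}\}_{t\in\mathcal T^c}$, gives an upper bound on the numerator:
\[
\log\tfrac{1}{n^c}\textstyle\sum_{t\in\mathcal T^c}e^{\beta d_{ij,t}}\le\beta m_{\rm out}+\beta^2 b_{ij}^2/8.
\]
This uses that those values lie in an interval of length at most $b_{ij}$, the global score range. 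Combining the two bounds, the log-ratio is at most
\[
-\beta\Delta_{ij}+\beta^2b_{ij}^2/8=\beta\bigl(\beta b_{ij}^2/8-\Delta_{ij}\bigr),
\]
which is strictly negative exactly when $0<\beta<8\Delta_{ij}/b_{ij}^2$. Note that $b_{ij}>0$ is automatic, since $\Delta_{ij}>0$ forces the scores to be non-constant, so the bound on $\beta$ is well defined.

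The main obstacle is this Hoeffding step. One has to check that the lemma is applied to the complement's empirical distribution with the correct range bound, and keep the Jensen side sharp enough that the threshold comes out as exactly $8\Delta_{ij}/b_{ij}^2$ rather than a weaker constant.
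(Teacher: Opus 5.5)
Your proof is correct and follows the paper's argument essentially step for step. Budget and sign preservation come from the positive, mean-one softmax weights; the strict inequality then reduces to comparing the exponential averages on $\mathcal T_{ij}$ and its complement, with Jensen's inequality bounding the relevant side and a Hoeffding-type bound $\beta m_{\rm out}+\beta^2 b_{ij}^2/8$ bounding the complement. The only difference is that the paper proves this Hoeffding bound inline, by bounding the second derivative of the log-moment-generating function by $b_{ij}^2/4$ and integrating via Taylor's remainder, whereas you cite the lemma directly.
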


Proposition~\ref{prop:CoCA-refinement} shows that perfect token rankings are unnecessary: a positive mean-score gap suffices to reduce off-target credit when allocation concentration is appropriately controlled.
Specifically, the bound permits stronger concentration for larger mean-score gaps but becomes more restrictive as the score range widens.
Thus, criterion-induced relevance can improve resolution while preserving the signed budgets grounded in observed outcomes.

\section{Experiments}

\subsection{Experimental Setup}
\paragraph{Datasets and Benchmarks.}
Following previous work~\citep{rar}, we evaluate CoCA on medical and scientific QA tasks, with separate policies trained on RaR-Medicine and RaR-Science, respectively.
We use the corresponding test splits for in-domain evaluation. 
To assess generalization, we further evaluate on HealthBench~\citep{healthbench}, LLMEval-Med~\citep{llmevalmed}, and MedQA~\citep{medqa} for medicine, and ResearchQA~\citep{researchqa} and GPQA-Diamond (GPQA-D)~\citep{gpqa} for science.
See Appendix~\ref{app:eval} for more dataset and evaluation details.

\paragraph{Models and Configurations.}
We use Qwen3-8B~\citep{qwen3} as the default base model and gpt-oss-120b as the default rubric judge.
Additional results across model sizes, architectures, and judges are provided in Appendix~\ref{app:model}.
We build our algorithm on the \texttt{verl} framework~\citep{verl} and train with a learning rate of $10^{-6}$, a batch size of $64$ prompts, and $G=8$ rollouts per prompt.
All experiments are conducted on $8\times$A100 GPUs, with all methods trained for $400$ steps to allow sufficient optimization.
For credit assignment, we set the criterion budget to $k=2$ for each rollout and set the allocation concentration parameter $\beta$ to $5$ for medicine and $3$ for science.
Detailed experimental protocols and training hyperparameter settings are provided in Appendix~\ref{app:train}.

\paragraph{Baselines.}
We compare CoCA with seven representative baselines:
(1) \textbf{RaR}~\citep{rar} uses GRPO with response-level rubric rewards;
(2) \textbf{OPSD}~\citep{opsd} performs on-policy self-distillation with rubrics as privileged teacher context;
(3) \textbf{RaR+OPSD} combines rubric-based GRPO with an auxiliary OPSD objective;
(4) \textbf{RuscaRL}~\citep{ruscarl} uses rubric scaffolds to guide rollout generation during training;
(5) \textbf{HeRL}~\citep{herl} guides exploration using failed rollouts and their unmet criteria as hindsight feedback;
(6) \textbf{POW3R}~\citep{pow3r} adaptively reweights rubric criteria based on within-group variation in criterion outcomes;
and (7) \textbf{CoRT}~\citep{cort} uses token-level likelihood contrasts from counterfactual rubric replay to redistribute response-level advantages.
More baseline details can be found in Appendix~\ref{app:baseline}.

\begin{table*}[!t]
\caption{\small Experimental results (\%) on medical and scientific benchmarks, using Qwen3-8B as the base model and gpt-oss-120b as the judge. Avg.\ denotes the mean score across all seven benchmarks, and the best results are highlighted in \textbf{bold}. Arrows indicate \textcolor{OliveGreen}{improvements ($\uparrow$)} or \textcolor{Maroon}{declines ($\downarrow$)} relative to the initial model.}
\centering
\renewcommand{\arraystretch}{1.3}
\setlength{\tabcolsep}{4pt}
\resizebox{\textwidth}{!}{
\begin{tabular}{l|c@{\hspace{5pt}}c@{\hspace{5pt}}c@{\hspace{4pt}}>{\columncolor{white}[4pt][8pt]}c@{\hspace{12pt}}c@{\hspace{6pt}}>{\columncolor{white}[4pt][6pt]}c@{\hspace{10pt}}>{\columncolor{white}[4pt][7pt]}c@{\hspace{7pt}}|@{\hspace{7pt}}>{\columncolor{white}[7pt][4pt]}c}
\toprule
\multirow{2}{*}{\textbf{Method}}
& \multicolumn{4}{c@{\hspace{12pt}}}{\textbf{Medicine}}
& \multicolumn{3}{@{}c@{\hspace{7pt}}|@{\hspace{7pt}}}{\textbf{Science}}
& \multirow{2}{*}{\textbf{Avg.}} \\
\cmidrule(l{4pt}r{12pt}){2-5}\cmidrule(l{0pt}r{14.4pt}){6-8}
& \textbf{RaR-Medicine}
& \textbf{HealthBench}
& \textbf{LLMEval-Med}
& \hspace*{-1pt}\textbf{MedQA}\hspace*{1pt}
& \textbf{RaR-Science}
& \textbf{ResearchQA}
& \hspace*{-2pt}\textbf{GPQA-D}\hspace*{2pt}
& \\
\midrule
Initial
& 54.2 \basex{0.0}
& 41.9 \basex{0.0}
& 51.1 \basex{0.0}
& 72.0 \basex{0.0}
& 67.6 \basex{0.0}
& 50.7 \basex{0.0}
& 49.7 \basex{0.0}
& 55.3 \basex{0.0} \\
RaR
& 65.0 \uprise{10.8}
& 47.5 \uprise{5.6}
& 55.4 \uprise{4.3}
& 73.8 \uprise{1.8}
& 72.4 \uprise{4.8}
& 56.6 \uprise{5.9}
& 51.6 \uprise{1.9}
& 60.3 \uprise{5.0} \\
OPSD
& 44.7 \down{9.5}
& 30.6 \down{11.3}
& 48.8 \down{2.3}
& 69.0 \down{3.0}
& 65.5 \down{2.1}
& 47.5 \down{3.2}
& 46.6 \down{3.1}
& 50.4 \down{4.9} \\
RaR+OPSD
& 61.8 \uprise{7.6}
& 42.1 \uprise{0.2}
& 53.4 \uprise{2.3}
& 72.1 \uprise{0.1}
& 67.9 \uprise{0.3}
& 50.6 \down{0.1}
& 48.8 \down{0.9}
& 56.7 \uprise{1.4} \\
RuscaRL
& 58.4 \uprise{4.2}
& 45.3 \uprise{3.4}
& 55.6 \uprise{4.5}
& 73.8 \uprise{1.8}
& 67.9 \uprise{0.3}
& 53.1 \uprise{2.4}
& 44.4 \down{5.3}
& 56.9 \uprise{1.6} \\
HeRL
& 67.5 \uprise{13.3}
& 49.8 \uprise{7.9}
& 56.7 \uprise{5.6}
& 73.3 \uprise{1.3}
& 72.7 \uprise{5.1}
& 58.4 \uprise{7.7}
& 51.6 \uprise{1.9}
& 61.4 \uprise{6.1} \\
POW3R
& 66.7 \uprise{12.5}
& 49.3 \uprise{7.4}
& 55.3 \uprise{4.2}
& 72.4 \uprise{0.4}
& 72.9 \uprise{5.3}
& 58.8 \uprise{8.1}
& 52.0 \uprise{2.3}
& 61.1 \uprise{5.8} \\
CoRT
& 67.1 \uprise{12.9}
& 48.0 \uprise{6.1}
& 55.7 \uprise{4.6}
& 75.9 \uprise{3.9}
& 72.6 \uprise{5.0}
& 58.1 \uprise{7.4}
& 47.7 \down{2.0}
& 60.7 \uprise{5.4} \\
\rowcolor{gray!15}
\textbf{CoCA (ours)}
& \textbf{69.8} \uprise{15.6}
& \textbf{54.3} \uprise{12.4}
& \textbf{61.8} \uprise{10.7}
& \textbf{77.6} \uprise{5.6}
& \textbf{74.2} \uprise{6.6}
& \textbf{59.6} \uprise{8.9}
& \textbf{54.4} \uprise{4.7}
& \textbf{64.5} \uprise{9.2} \\
\bottomrule
\end{tabular}
}
\label{tb:main}
\end{table*}

\subsection{Main Results}
\paragraph{CoCA achieves SOTA performance.}
As shown in Table~\ref{tb:main}, CoCA achieves the best performance on all seven benchmarks, reaching an average score of $64.5\%$ and improving over RaR, the standard rubric-based RL baseline, by $4.2\%$.
These gains highlight the importance of translating criterion-level feedback into targeted token-level updates for effective rubric-based learning.
Notably, RaR + OPSD even underperforms RaR on every benchmark and trails CoCA by $7.8\%$ on average, showing that directly optimizing an auxiliary rubric-conditioned distillation objective can undermine reward-driven learning.
Moreover, CoCA surpasses CoRT, which also targets credit assignment in rubric-based RL, by $3.8\%$ on average.
This gap suggests that resolving credit for individual criteria is more effective than reweighting response-level advantages across tokens, as done in CoRT.
Collectively, CoCA enables more effective rubric-based RL through criterion-aware credit assignment.

\paragraph{CoCA enables more targeted rubric-based learning.}
To distinguish criterion-level gains and losses obscured by aggregate scores, we measure correction and regression rates relative to the initial model on the same test questions.
As shown in Figure~\ref{fig:criterion}, CoCA achieves the highest correction and lowest regression rates in both domains.
Compared with RaR, correction rates increase from $36.7\%$ to $46.3\%$ in medicine and from $26.5\%$ to $32.0\%$ in science.
Meanwhile, regression rates decrease from $10.8\%$ to $8.6\%$ and from $9.5\%$ to $7.6\%$, respectively.
By comparison, RaR+OPSD exhibits the opposite pattern, combining fewer corrections with more regressions than RaR in both domains.
This contrast highlights CoCA's more targeted rubric-based learning: it corrects unmet criteria more effectively while better preserving those already satisfied.

\begin{table*}[!t]
\caption{\small Ablation results (\%) for credit allocation mechanisms. Arrows indicate changes relative to CoCA.}
\centering
\renewcommand{\arraystretch}{1.3}
\setlength{\tabcolsep}{4pt}
\resizebox{\textwidth}{!}{
\begin{tabular}{l@{\hspace{7pt}}|c@{\hspace{5pt}}c@{\hspace{5pt}}c@{\hspace{4pt}}>{\columncolor{white}[4pt][8pt]}c@{\hspace{12pt}}c@{\hspace{6pt}}>{\columncolor{white}[4pt][6pt]}c@{\hspace{10pt}}>{\columncolor{white}[4pt][7pt]}c@{\hspace{7pt}}|@{\hspace{7pt}}>{\columncolor{white}[7pt][4pt]}c}
\toprule
\multirow{2}{*}{\textbf{Method}}
& \multicolumn{4}{c@{\hspace{12pt}}}{\textbf{Medicine}}
& \multicolumn{3}{@{}c@{\hspace{7pt}}|@{\hspace{7pt}}}{\textbf{Science}}
& \multirow{2}{*}{\textbf{Avg.}} \\
\cmidrule(l{4pt}r{12pt}){2-5}
\cmidrule(l{0pt}r{14.4pt}){6-8}
& \textbf{RaR-Medicine}
& \textbf{HealthBench}
& \textbf{LLMEval-Med}
& \hspace*{-1pt}\textbf{MedQA}\hspace*{1pt}
& \textbf{RaR-Science}
& \textbf{ResearchQA}
& \hspace*{-2pt}\textbf{GPQA-D}\hspace*{2pt}
& \\
\midrule
\textbf{CoCA}
& \textbf{69.8} \basex{0.0}
& \textbf{54.3} \basex{0.0}
& \textbf{61.8} \basex{0.0}
& \textbf{77.6} \basex{0.0}
& \textbf{74.2} \basex{0.0}
& \textbf{59.6} \basex{0.0}
& \textbf{54.4} \basex{0.0}
& \textbf{64.5} \basex{0.0}\\
\midrule
Uniform
& 65.0 \down{4.8}
& 47.5 \down{6.8}
& 55.4 \down{6.4}
& 73.8 \down{3.8}
& 72.4 \down{1.8}
& 56.6 \down{3.0}
& 51.6 \down{2.8}
& 60.3 \down{4.2} \\
Shuffled
& 50.7 \down{19.1}
& 39.4 \down{14.9}
& 53.1 \down{8.7}
& 69.2 \down{8.4}
& 65.2 \down{9.0}
& 50.5 \down{9.1}
& 43.8 \down{10.6}
& 53.1 \down{11.4} \\
Shared
& 67.7 \down{2.1}
& 50.1 \down{4.2}
& 57.6 \down{4.2}
& 73.9 \down{3.7}
& 72.7 \down{1.5}
& 57.7 \down{1.9}
& 52.0 \down{2.4}
& 61.7 \down{2.8} \\
\bottomrule
\end{tabular}
}
\label{tb:ablation_1}
\vspace{-2pt}
\end{table*}

\begin{figure*}[!t]
    \centering
    \includegraphics[width=\textwidth]{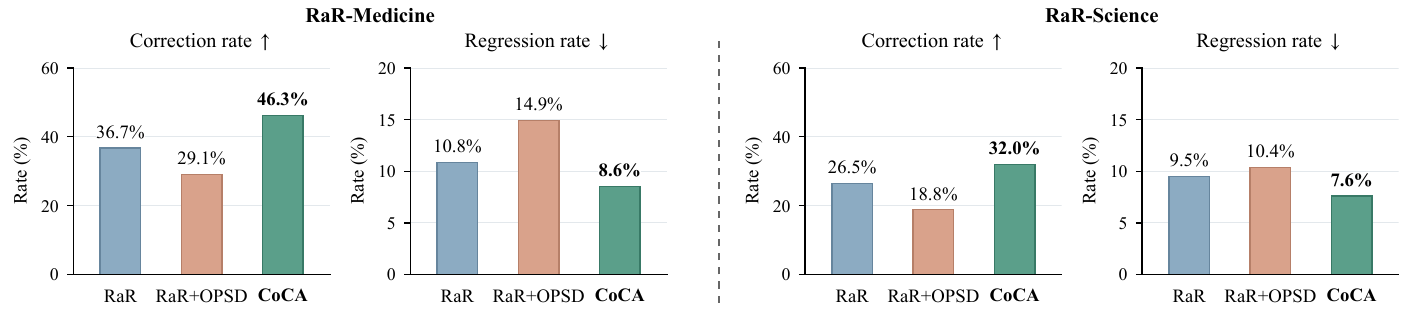}
    \caption{\small Criterion-level correction ($\uparrow$) and regression ($\downarrow$) rates on RaR-Medicine and RaR-Science datasets. Correction measures the fraction of initially unmet criteria that become satisfied after training, while regression measures the fraction of initially satisfied criteria that become unmet.}
    \label{fig:criterion}
    \vspace{-6pt}
\end{figure*}

\subsection{Further Analysis}
\paragraph{Analysis of paradigm limitations.}
To investigate optimization challenges in rubric-based RL and OPSD, we quantify two types of signal conflict during training: 
(i) \textit{criterion credit conflict}, where a rollout's response-level advantage and at least one criterion credit have opposite signs; 
(ii) \textit{distillation misalignment}, where OPSD and outcome-grounded learning yield negatively aligned logit gradients for the same criterion--token pair.
As shown in Figure~\ref{fig:conflicts}, criterion credit conflict reaches $54.6\%$ in medicine and $50.4\%$ in science, while distillation misalignment reaches $49.0\%$ and $48.8\%$, respectively.
These frequent conflicts indicate that rubric-based RL can reinforce criterion-violating content or suppress criterion-satisfying content, while OPSD often conflicts with outcome-grounded learning at the logit level.
This sheds light on the performance limitations of both paradigms.

\paragraph{Ablation of credit allocation mechanisms.}
Table~\ref{tb:ablation_1} compares CoCA with three allocation variants.
\textit{Uniform} assigns equal weights to all response tokens, reducing the average score by $4.2\%$ relative to CoCA.
This decrease supports the benefit of resolving criterion-level credit through nonuniform token allocation.
\textit{Shuffled} applies the same random permutation of token positions to the allocation weights of all criteria within each response.
This preserves the full distribution of aggregated token advantages while disrupting their alignment with the response tokens.
Its average score drops by $11.4\%$, highlighting the importance of assigning credit to appropriate token positions.
Furthermore, to examine the role of criterion-specific allocation, we introduce \textit{Shared}, which redistributes the response-level advantage using token weights averaged across the selected criteria.
The resulting $2.8\%$ decrease in average score supports using distinct allocation profiles for different criteria.
These results demonstrate that CoCA achieves more effective credit resolution by allocating each criterion's credit according to its token relevance.

\paragraph{Ablation of relevance measures.}
To examine how relevance estimation affects credit allocation, we compare JSD with alternative measures in Table~\ref{tb:ablation_2}.
Replacing JSD with forward KL and reverse KL reduces the average score by $5.3\%$ and $2.8\%$, respectively.
Compared with these asymmetric, unbounded divergences, JSD treats both distributions symmetrically and limits extreme relevance scores, which offers a plausible explanation for its advantage.
We further consider the signed log-probability ratio, which favors generated tokens whose probabilities increase under criterion conditioning.
This substitution reduces the average score by $10.3\%$, showing that full-distribution changes provide more informative allocation signals than the directional probability shift of the generated token alone.
These comparisons support JSD as an effective relevance measure for credit allocation.

\begin{table*}[!t]
\caption{\small Ablation results (\%) for relevance measures. Arrows indicate changes relative to CoCA.}
\centering
\renewcommand{\arraystretch}{1.3}
\setlength{\tabcolsep}{4pt}
\resizebox{\textwidth}{!}{
\begin{tabular}{l@{\hspace{7pt}}|c@{\hspace{5pt}}c@{\hspace{5pt}}c@{\hspace{4pt}}>{\columncolor{white}[4pt][8pt]}c@{\hspace{12pt}}c@{\hspace{6pt}}>{\columncolor{white}[4pt][6pt]}c@{\hspace{10pt}}>{\columncolor{white}[4pt][7pt]}c@{\hspace{7pt}}|@{\hspace{7pt}}>{\columncolor{white}[7pt][4pt]}c}
\toprule
\multirow{2}{*}{\textbf{Measure}}
& \multicolumn{4}{c@{\hspace{12pt}}}{\textbf{Medicine}}
& \multicolumn{3}{@{}c@{\hspace{7pt}}|@{\hspace{7pt}}}{\textbf{Science}}
& \multirow{2}{*}{\textbf{Avg.}} \\
\cmidrule(l{4pt}r{12pt}){2-5}
\cmidrule(l{0pt}r{14.4pt}){6-8}
& \textbf{RaR-Medicine}
& \textbf{HealthBench}
& \textbf{LLMEval-Med}
& \hspace*{-1pt}\textbf{MedQA}\hspace*{1pt}
& \textbf{RaR-Science}
& \textbf{ResearchQA}
& \hspace*{-2pt}\textbf{GPQA-D}\hspace*{2pt}
& \\
\midrule
\textbf{JSD (CoCA)}
& \textbf{69.8} \basex{0.0}
& \textbf{54.3} \basex{0.0}
& \textbf{61.8} \basex{0.0}
& \textbf{77.6} \basex{0.0}
& \textbf{74.2} \basex{0.0}
& \textbf{59.6} \basex{0.0}
& \textbf{54.4} \basex{0.0}
& \textbf{64.5} \basex{0.0}\\
\midrule
Forward KL
& 60.8 \down{9.0} & 48.8 \down{5.5} & 57.3 \down{4.5} & 73.5 \down{4.1} & 70.3 \down{3.9} & 54.5 \down{5.1} & 49.1 \down{5.3} & 59.2 \down{5.3} \\
Reverse KL
& 66.7 \down{3.1} & 49.7 \down{4.6} & 57.1 \down{4.7} & 75.9 \down{1.7} & 72.9 \down{1.3} & 57.2 \down{2.4} & 52.4 \down{2.0} & 61.7 \down{2.8} \\
Log-ratio
& 53.4 \down{16.4} & 41.4 \down{12.9} & 50.8 \down{11.0} & 70.0 \down{7.6} & 66.9 \down{7.3} & 50.7 \down{8.9} & 46.3 \down{8.1} & 54.2 \down{10.3} \\
\bottomrule
\end{tabular}
}
\label{tb:ablation_2}
\end{table*}

\begin{figure*}[t]
    \centering
    \begin{subfigure}[t]{0.32\textwidth}
        \centering
        \vspace{0pt}
        \includegraphics[width=\linewidth]{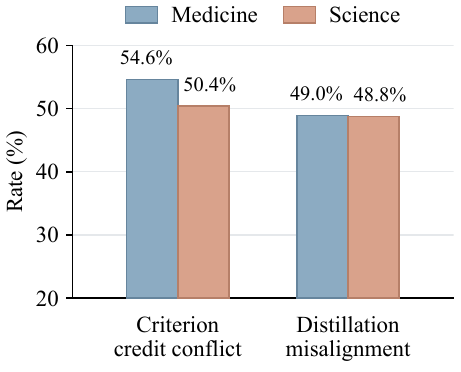}
        \caption{Signal conflict rates}
        \label{fig:conflicts}
    \end{subfigure}
    \hfill
    \begin{subfigure}[t]{0.65\textwidth}
        \centering
        \vspace{0pt}
        \includegraphics[width=\linewidth]{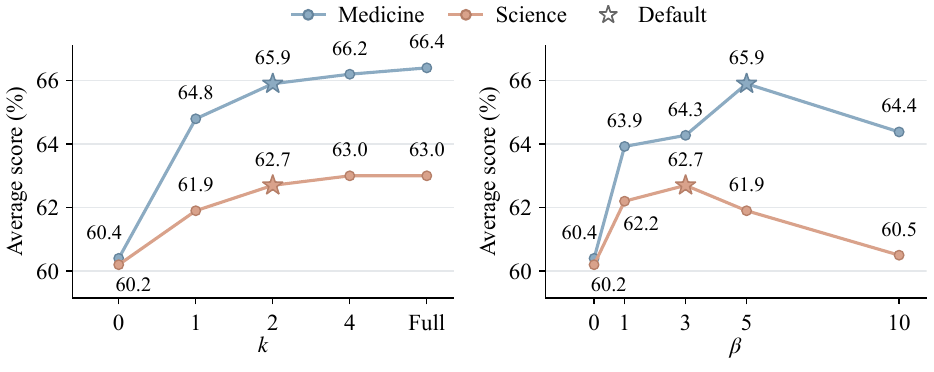}
        \caption{Sensitivity to budget $k$ and allocation concentration $\beta$}
        \label{fig:sensitivity}
    \end{subfigure}
    \caption{\small
    (a) Rates of two signal conflicts: response-level advantages versus criterion-level credits, and distillation gradients versus outcome-grounded gradients.
    (b) Effects of $k$ and $\beta$ on average scores in each domain.
    }
    \vspace{-6pt}
    \label{fig:sensitivity_conflicts}
\end{figure*}

\paragraph{Hyperparameter sensitivity.}
We study CoCA's sensitivity to two key hyperparameters: the criterion budget $k$ and allocation concentration $\beta$.
As shown in Figure~\ref{fig:sensitivity}, even $k=1$ improves average scores over uniform allocation ($k=0$) by $4.4\%$ in medicine and $1.7\%$ in science.
Beyond $k=2$, gains are marginal: selecting all criteria adds only $0.5\%$ and $0.3\%$, respectively.
These results indicate that prioritizing criteria with the largest absolute credits captures most of the benefit of full allocation.
Thus, $k=2$ offers a practical balance between performance and computation, requiring at most two additional forward passes per rollout.
For $\beta$, performance peaks at $5$ in medicine and $3$ in science.
This pattern favors moderate concentration, suggesting that small values dilute updates to relevant tokens, while large values can neglect other tokens needed for criterion satisfaction.

\paragraph{Case study of criterion--token relevance.}
To qualitatively assess whether criterion-induced distribution shifts capture token-level relevance, Figure~\ref{fig:illustration} presents a medical response under a criterion concerning when to seek medical attention.
Higher relevance scores concentrate on the parts of the response that provide the corresponding recommendation and describe the conditions under which it applies, while the general care advice receives much lower scores.
Notably, this contrast indicates that the relevance signal captures criterion-specific semantics rather than general topical relevance.
Such precise localization enables CoCA to perform targeted credit assignment at the token level.


\begin{figure*}[!t]
    \centering
    \includegraphics[width=\textwidth]{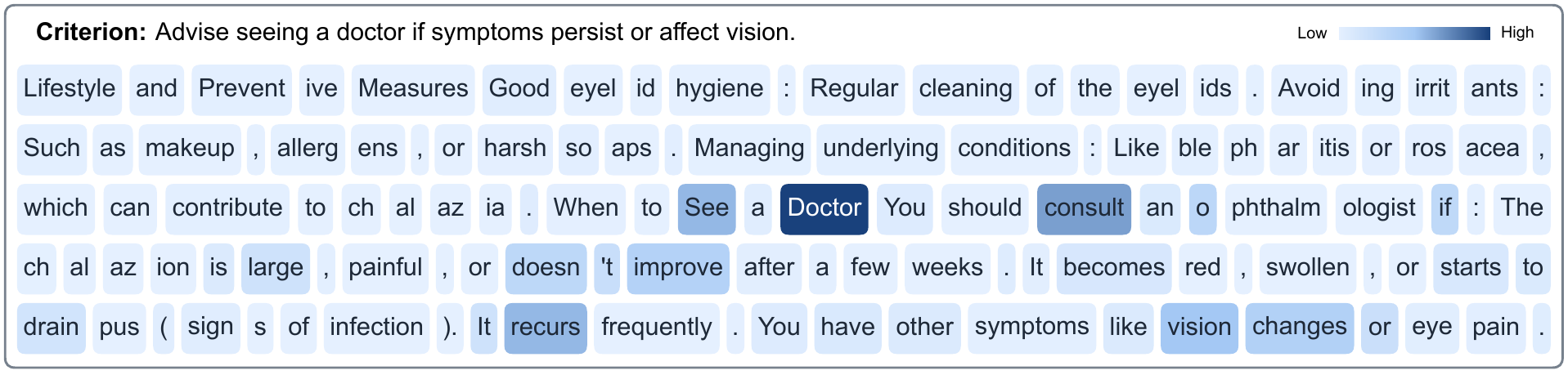}
    \caption{\small Visualization of criterion--token relevance in a medical response. Darker colors indicate higher JSD between next-token distributions with and without criterion conditioning.}
    \label{fig:illustration}
    \vspace{-6pt}
\end{figure*}

\section{Related Work}
\label{sec:related-work}

\paragraph{Rubrics as rewards.}
Rubric-based RL evaluates open-ended responses against explicit criteria, typically aggregating the resulting judgments into a scalar reward~\citep{rar,checklists,rubricanchors}.
To strengthen the learning signal, existing methods refine criteria as the policy evolves~\citep{drtulu,care}, adjust criterion weights~\citep{pow3r}, or alternate optimization across criterion categories~\citep{arlrr}.
However, these methods retain response-level rewards.
To provide finer-grained supervision, another line of work localizes criterion feedback to reasoning steps or token spans~\citep{rtt,srar,draco,rcca}.
Their reliance on explicit attribution signals or separately trained relevance models adds supervision and computational overhead that complicates scaling.
More fundamentally, these advances do not by themselves ensure that each criterion's signed contribution to the response-level advantage is preserved, motivating our focus on criterion-aware credit assignment.

\paragraph{Rubrics as guidance.}
Beyond defining rewards, rubrics also provide training-time guidance for policy learning.
One line of work guides rollout generation with rubric instructions or feedback to promote exploration~\citep{ruscarl,rgr,herl,riserl}.
However, training on rubric-guided prefixes creates a mismatch with unguided inference, introducing exposure bias and a risk of compounding errors.
To avoid this mismatch, another line adopts OPSD, which uses a rubric-informed teacher to provide token-level supervision on rollouts sampled from the unguided student~\citep{rcsd,rgsd,rupi,llmcoach,papergym}.
Such distillation can serve as a standalone optimization objective or be combined with GRPO as an auxiliary objective.
Although these methods provide dense supervision, they introduce a separate imitation objective that may conflict with reward-driven optimization.
We therefore use privileged rubric information as a relevance cue for outcome-grounded credit allocation.

\paragraph{Credit assignment.}
In classical RL, temporal credit is assigned by estimating advantages from value functions~\citep{gae}, redistributing delayed rewards~\citep{rudder}, or inferring action contributions from subsequent outcomes~\citep{hca,cocoa}.
For LLMs trained with RL on verifiable tasks, methods infer local rewards or advantages using implicit reward models~\citep{prime} or shared rollout prefixes~\citep{tempo}, while others concentrate policy updates on high-entropy tokens~\citep{entropytokens}.
Rubric-based RL poses an additional challenge: a response may satisfy some criteria while violating others, so redistributing an aggregate advantage alone cannot distinguish their opposing contributions.
CoRT derives positive token weights through counterfactual rubric replay~\citep{cort}, but applies them to a single response-level advantage, tying all token advantages to its sign.
Hence, the key is to preserve each criterion's signed contribution during token-level credit assignment.

\section{Conclusion}
In this work, we propose CoCA, a criterion-aware credit assignment framework for rubric-based RL.
Our core insight is that effective learning from rubrics requires preserving the distinct contributions of individual criteria and assigning their credit to relevant parts of a response.
CoCA realizes this principle by deriving signed criterion credits from observed outcomes and using privileged rubric information to guide their allocation across tokens.
This translates criterion-level feedback into fine-grained, outcome-grounded policy updates.
Experiments across medicine and science show CoCA outperforms standard rubric-based RL by $4.2\%$, with higher correction and lower regression rates.
We hope our work encourages greater attention to credit assignment as a bridge between structured evaluation and policy optimization, paving the way toward more effective RL for open-ended tasks.

\subsection*{AI use statement}
We used generative AI for literature retrieval and discovery, coding assistance, and improving the grammar, clarity, and readability of author-written text.
In our experimental pipeline, gpt-oss-120b and DeepSeek-V4.1-Flash served as rubric-based judges, while GPT-6 Astra generated word-level annotations for the criterion--token relevance analysis. These experimental uses and their validation procedures are described in the appendix.
The authors reviewed all AI-assisted work and take full responsibility for the content of the paper.

\subsection*{Ethics statement}
This work develops and evaluates a criterion-aware reinforcement learning method using only publicly available datasets and benchmarks in medical and scientific QA.
The medical benchmarks serve as research testbeds for assessing open-ended response quality and generalization. 
Accordingly, our results should not be interpreted as evidence of clinical safety or suitability for clinical use.

\subsection*{Reproducibility statement}
Complete proofs of the theoretical results are provided in Appendix~\ref{app:proofs}.
We document the datasets and data splits, evaluation protocols, prompts, training and implementation settings, and baseline reproduction details in Appendix~\ref{app:exp_detail}.
The source code, configuration files, and evaluation scripts are included in the anonymous supplementary material.

\bibliography{iclr2027_conference}
\bibliographystyle{iclr2027_conference}

\appendix
\raggedbottom
\section{Theoretical Proofs}
\label{app:proofs}

\subsection{Proof of Proposition~\ref{prop:rubric-rl-collapse}}
\label{app:proof-1}

\begin{cocarestated}{prop:rubric-rl-collapse}{Credit resolution collapse in rubric-based RL}
For any oracle resolution $\mathbf C_i^*\in\mathcal C_i(\mathbf a_i)$, the token credit induced by rubric-based RL and the corresponding projection error satisfy
\begin{equation*}
    \left(\mathbf 1_M^\top\mathbf a_i\right)\mathbf 1_{T_i}
    =
    A_i\mathbf 1_{T_i}
    =
    \boldsymbol\alpha_i
    \!\left(\Pi_{\mathcal U_i}(\mathbf C_i^*)\right),
    \quad
    \tfrac{1}{T_i}
    \left\|
        \mathbf C_i^*-\Pi_{\mathcal U_i}(\mathbf C_i^*)
    \right\|_F^2
    =
    {\textstyle\sum_{j=1}^{M}}
    \operatorname{Var}_{t}
    \left([\mathbf C_i^*]_{j,t}\right).
    \tag{\ref{eq:rubric-rl-collapse}}
\end{equation*}
Moreover, the linear map $\mathbf a\mapsto(\mathbf 1_M^\top\mathbf a)\mathbf 1_{T_i}$ has rank one and kernel $\mathbf 1_M^\perp$.
Consequently, any nonzero criterion budget vector $\mathbf a_i\in\mathbf 1_M^\perp$ yields a zero token-credit vector, while any position $t$ satisfying $A_i[\boldsymbol\alpha_i(\mathbf C_i^*)]_t<0$ is assigned credit whose sign is opposite to its oracle token credit.
\end{cocarestated}

\begin{proof}
Fix a rollout $i$ and an oracle resolution $\mathbf C_i^*\in\mathcal C_i(\mathbf a_i)$.
Recall that $\mathcal U_i=\{\mathbf u\mathbf 1_{T_i}^{\top}:\mathbf u\in\mathbb R^M\}$ consists of matrices whose rows are constant across tokens.
For any $\mathbf C\in\mathbb R^{M\times T_i}$, write $\mu_j=T_i^{-1}\sum_t C_{j,t}$ for its $j$-th row mean.
Centering each row gives
\begin{equation*}
\begin{aligned}
    \|\mathbf C-\mathbf u\mathbf 1_{T_i}^{\top}\|_F^2
    &=\sum_{j=1}^{M}\sum_{t=1}^{T_i}
      \bigl((C_{j,t}-\mu_j)+(\mu_j-u_j)\bigr)^2\\
    &=\sum_{j=1}^{M}\sum_{t=1}^{T_i}(C_{j,t}-\mu_j)^2
      +T_i\sum_{j=1}^{M}(u_j-\mu_j)^2.
\end{aligned}
\end{equation*}
The cross terms vanish because $\sum_t(C_{j,t}-\mu_j)=0$ for every row.
The first term is independent of $\mathbf u$, while the second is uniquely minimized at $u_j=\mu_j$.
Thus, projection replaces each row by its mean.
Since $T_i^{-1}\mathbf C_i^*\mathbf 1_{T_i}=\mathbf a_i$ by the budget constraint,
\begin{equation*}
    \Pi_{\mathcal U_i}(\mathbf C)
    =\left(\frac{1}{T_i}\mathbf C\mathbf 1_{T_i}\right)\mathbf 1_{T_i}^{\top},
    \qquad
    \Pi_{\mathcal U_i}(\mathbf C_i^*)=\mathbf a_i\mathbf 1_{T_i}^{\top}.
\end{equation*}
Using $\boldsymbol\alpha_i(\mathbf C)=\mathbf C^\top\mathbf 1_M$ and
$\mathbf 1_M^\top\mathbf a_i=A_i$, we obtain
\begin{equation*}
    \boldsymbol\alpha_i\!\left(\Pi_{\mathcal U_i}(\mathbf C_i^*)\right)
    =(\mathbf a_i\mathbf 1_{T_i}^{\top})^\top\mathbf 1_M
    =(\mathbf 1_M^\top\mathbf a_i)\mathbf 1_{T_i}
    =A_i\mathbf 1_{T_i}.
\end{equation*}
This is the token-credit vector used by rubric-based RL, regardless of the oracle's allocation within each row.
Moreover, because the $j$-th row of $\mathbf C_i^*$ has mean $A_{ij}$,
\begin{equation*}
    \frac{1}{T_i}\left\|\mathbf C_i^*-\Pi_{\mathcal U_i}(\mathbf C_i^*)\right\|_F^2
    =\sum_{j=1}^{M}\frac{1}{T_i}\sum_{t=1}^{T_i}
      \bigl([\mathbf C_i^*]_{j,t}-A_{ij}\bigr)^2
    =\sum_{j=1}^{M}\operatorname{Var}_{t}\!\left([\mathbf C_i^*]_{j,t}\right),
\end{equation*}
where $\operatorname{Var}_t$ uses the uniform distribution over the $T_i$ token positions.
Thus, the projection discards exactly the within-criterion variation measured by this identity.

Finally, the budget-to-token map is represented by
$\mathbf L_i=\mathbf 1_{T_i}\mathbf 1_M^\top$.
Both factors in this outer product are nonzero, so $\operatorname{rank}(\mathbf L_i)=1$.
Furthermore,
\begin{equation*}
    \mathbf L_i\mathbf a=0
    \quad\Longleftrightarrow\quad
    \mathbf 1_M^\top\mathbf a=0,
    \qquad
    \ker\mathbf L_i=\mathbf 1_M^\perp.
\end{equation*}
Every nonzero vector in this kernel gives zero token credit despite its nonzero criterion contributions.
Since rubric-based RL assigns $A_i$ at every position, the condition
$A_i[\boldsymbol\alpha_i(\mathbf C_i^*)]_t<0$ means that the assigned and oracle token credits are nonzero and have opposite signs.
\end{proof}

\subsection{Proof of Proposition~\ref{prop:opsd-grounding}}
\label{app:proof-2}

\begin{cocarestated}{prop:opsd-grounding}{OPSD matching can reduce reward}
Consider the next-token decision at $h_{i,t}$ as a one-step contextual bandit with finite action space $\mathcal V$, a reward function $R:\mathcal V\to\mathbb R$, and a tabular softmax policy.
Let $\theta'$ be obtained by a natural-gradient descent step of size $\eta>0$ on the KL term of Eq.~(\ref{eq:opsd}), using the local Fisher information and holding the self-teacher fixed.
Then, as $\eta\to0$,
\begin{equation*}
    J(\theta';h_{i,t})-J(\theta;h_{i,t})
    =
    \eta\,
    \operatorname{Cov}_{a\sim\pi_\theta(\cdot\mid h_{i,t})}
    \left(
        R(a),
        r_{i,t}(a;\xi)
    \right)
    +
    \mathcal O(\eta^2).
    \tag{\ref{eq:opsd-grounding}}
\end{equation*}
When $|\mathcal V|\geq3$, there exist instances for which $J(\theta;h_{i,t}(\xi))\!>\!J(\theta;h_{i,t})$ while the covariance in Eq.~(\ref{eq:opsd-grounding}) is strictly negative, implying $J(\theta';h_{i,t})\!<\!J(\theta;h_{i,t})$ for sufficiently small $\eta>0$.
\end{cocarestated}

\begin{proof}
We first derive the local natural-gradient update and its first-order effect on reward.
We then construct a higher-reward teacher for which this effect is negative.
Throughout the proof, the action space is finite and all logarithms are natural.

\emph{Local matching update.}
At the fixed context $h=h_{i,t}$, write $p$ for the student distribution, $q$ for the detached teacher distribution, and $\mathbf r$ for their matching signal:
\begin{equation*}
    p_a=\pi_\theta(a\mid h),\qquad
    q_a=\pi_\theta(a\mid h_{i,t}(\xi)),\qquad
    r_a=\log\frac{q_a}{p_a}.
\end{equation*}
These quantities are evaluated before the update, and $q$ is held fixed when differentiating the matching objective.
Finite softmax logits ensure $p_a,q_a>0$, so every $r_a$ is finite.
Let $n=|\mathcal V|$ and let $\boldsymbol\vartheta\in\mathbb R^n$ denote the student logits at this fixed context.
The local reverse-KL loss is
\begin{equation*}
    \ell(\boldsymbol\vartheta)
    =D_{\mathrm{KL}}(p\|q)
    =\sum_{a\in\mathcal V}p_a\log\frac{p_a}{q_a},
    \qquad
    p_a=\frac{e^{\vartheta_a}}{\sum_{b\in\mathcal V}e^{\vartheta_b}}.
\end{equation*}
Writing $\mathbf e_a$ for the coordinate vector of action $a$, softmax differentiation gives
\begin{equation*}
    \nabla_{\boldsymbol\vartheta}\log p_a=\mathbf e_a-p,
    \qquad
    \frac{\partial p_a}{\partial\vartheta_b}
    =p_a\bigl(\mathbf 1\{a=b\}-p_b\bigr).
\end{equation*}
Consequently, the local Fisher matrix equals the softmax Jacobian:
\begin{equation*}
    F=\sum_a p_a(\mathbf e_a-p)(\mathbf e_a-p)^\top
      =\operatorname{diag}(p)-pp^\top.
\end{equation*}
Differentiating the loss through the student probabilities yields
\begin{equation*}
\begin{aligned}
    \frac{\partial\ell}{\partial\vartheta_b}
    &=\sum_a\frac{\partial p_a}{\partial\vartheta_b}
      \left(\log\frac{p_a}{q_a}+1\right),\\
    \nabla_{\boldsymbol\vartheta}\ell
    &=F(\log p-\log q+\mathbf 1_n)=-F\mathbf r,
\end{aligned}
\end{equation*}
Here $\mathbf r=(r_a)_{a\in\mathcal V}$, vector logarithms are componentwise, and $F\mathbf 1_n=0$ gives the last equality.

The Fisher matrix is singular because softmax is invariant to a common shift of all logits.
Its nullspace can be identified from the following quadratic form, valid for every $\mathbf v\in\mathbb R^n$:
\begin{equation*}
    \mathbf v^\top F\mathbf v
    =\sum_a p_a v_a^2-\left(\sum_a p_a v_a\right)^2
    =\operatorname{Var}_{a\sim p}(v_a).
\end{equation*}
Since $p$ has full support, this variance vanishes exactly when every $v_a$ has the same value.
Thus $\ker F=\operatorname{span}(\mathbf 1_n)$, and its Moore--Penrose pseudoinverse satisfies
\begin{equation*}
    F^\dagger F=I_n-\frac1n\mathbf 1_n\mathbf 1_n^\top.
\end{equation*}
The natural-gradient descent step, with $F$ evaluated at the original student policy, is therefore
\begin{equation*}
    \boldsymbol\vartheta'
    =\boldsymbol\vartheta-\eta F^\dagger\nabla_{\boldsymbol\vartheta}\ell
      =\boldsymbol\vartheta+\eta F^\dagger F\mathbf r
    =\boldsymbol\vartheta+\eta(\mathbf r-\kappa\mathbf 1_n),
    \qquad \kappa=\frac1n\sum_a r_a.
\end{equation*}
The common shift $-\eta\kappa$ cancels in softmax, giving the exact probability update
\begin{equation*}
    p'_a
    =\frac{e^{\vartheta_a+\eta(r_a-\kappa)}}
           {\sum_b e^{\vartheta_b+\eta(r_b-\kappa)}}
    =\frac{p_a e^{\eta r_a}}{Z(\eta)},
    \qquad
    Z(\eta)=\sum_b p_b e^{\eta r_b}.
\end{equation*}

\emph{First-order changes in reward and matching loss.}
Let $\bar r=\sum_a p_a r_a$ be the student-weighted mean of the matching signal.
This differs from the unweighted mean $\kappa$ used only to fix the logit shift above.
Expanding the normalizing factor and its reciprocal at $\eta=0$ gives
\begin{equation*}
    Z(\eta)=1+\eta\bar r+\mathcal O(\eta^2),
    \qquad
    Z(\eta)^{-1}=1-\eta\bar r+\mathcal O(\eta^2).
\end{equation*}
Combining these expansions with $e^{\eta r_a}=1+\eta r_a+\mathcal O(\eta^2)$ yields
\begin{equation*}
    p'_a
    =p_a\bigl(1+\eta r_a+\mathcal O(\eta^2)\bigr)
      \bigl(1-\eta\bar r+\mathcal O(\eta^2)\bigr)
    =p_a+\eta p_a(r_a-\bar r)+\mathcal O(\eta^2).
\end{equation*}
For fixed $p,q$, every log-ratio is finite and $Z(0)=1$, so the updated probabilities are smooth near $\eta=0$.
The finite action space allows a common bound on their second-order remainders.
Since $\mathcal V$ is finite and $R$ is real-valued,
$\max_{a\in\mathcal V}|R(a)|<\infty$.
Multiplying the probability expansion by $R(a)$ and summing
therefore gives
\begin{equation*}
\begin{aligned}
    J(\theta';h)-J(\theta;h)
    &=\sum_a R(a)(p'_a-p_a)\\
    &=\eta\left(\sum_a p_a R(a)r_a
      -\left(\sum_a p_a R(a)\right)\left(\sum_a p_a r_a\right)\right)
      +\mathcal O(\eta^2)\\
    &=\eta\operatorname{Cov}_{a\sim p}(R(a),r_a)+\mathcal O(\eta^2).
\end{aligned}
\end{equation*}
Restoring $h=h_{i,t}$ and $r_a=r_{i,t}(a;\xi)$ proves the first assertion.

The same step also decreases the local matching loss for sufficiently small positive $\eta$ whenever $\mathbf r$ is nonconstant.
Indeed, a Taylor expansion of the loss along the update direction gives
\begin{equation*}
\begin{aligned}
\ell(\boldsymbol\vartheta')-\ell(\boldsymbol\vartheta)
    &=-\eta(\nabla_{\boldsymbol\vartheta}\ell)^\top
      F^\dagger\nabla_{\boldsymbol\vartheta}\ell+\mathcal O(\eta^2)\\
&=-\eta\mathbf r^\top FF^\dagger F\mathbf r+\mathcal O(\eta^2)
=-\eta\operatorname{Var}_{a\sim p}(r_a)+\mathcal O(\eta^2),
\end{aligned}
\end{equation*}
using $FF^\dagger F=F$.
A negative reward covariance is possible only when $\mathbf r$ is nonconstant.
Thus, in that case, small matching steps decrease both the distillation loss and expected reward.

\emph{A higher-reward teacher with a harmful matching direction.}
Consider three actions with rewards $R$, student probabilities $p$, and teacher probabilities $q$, listed in the same order:
\begin{equation*}
    R=\left(1,\frac12,0\right),\qquad
    p=\left(\frac1{20},\frac9{20},\frac12\right),\qquad
    q=\left(\frac3{10},\frac1{20},\frac{13}{20}\right).
\end{equation*}
Both distributions have full support and sum to one.
Their expected rewards are
\begin{equation*}
    \mathbb E_p[R]=\frac1{20}+\frac9{20}\cdot\frac12=\frac{11}{40},
    \qquad
    \mathbb E_q[R]=\frac3{10}+\frac1{20}\cdot\frac12=\frac{13}{40}.
\end{equation*}
The teacher therefore has strictly higher expected reward.
Nevertheless, its matching signal $\mathbf r=(\log6,-\log9,\log(13/10))$ has negative covariance with reward under the student:
\begin{equation*}
\begin{aligned}
    \operatorname{Cov}_p(R,r)
    &=\sum_{a=1}^{3}p_a\left(R(a)-\frac{11}{40}\right)r_a\\
    &=\frac1{20}\cdot\frac{29}{40}\log6
      -\frac9{20}\cdot\frac9{40}\log9
      -\frac12\cdot\frac{11}{40}\log\frac{13}{10}\\
    &=\frac{29\log6-81\log9-110\log(13/10)}{800}<0.
\end{aligned}
\end{equation*}
The sign is strict because $\log9>\log6>0$ and $\log(13/10)>0$.
This construction is realizable by a tabular softmax policy: choose the logits to be $\log p$ at the ordinary context and $\log q$ at the privileged context.
All logits are finite, and the two context-specific tables can be chosen independently.

With $c=-\operatorname{Cov}_p(R,r)>0$, the reward expansion becomes
\begin{equation*}
    J(\theta';h)-J(\theta;h)=-c\eta+\mathcal O(\eta^2).
\end{equation*}
For all sufficiently small positive $\eta$, the remainder has magnitude at most $c\eta/2$.
The reward change is then at most $-c\eta/2<0$, even though the teacher has higher expected reward.
The matching-loss expansion also ensures that the same step strictly decreases the distillation objective.

Finally, extend the construction to any $n>3$ by keeping the first two actions and replacing the third by $n-2$ zero-reward actions.
For each $a=3,\ldots,n$, set
\begin{equation*}
    R(a)=0,\qquad
    p_a=\frac{1}{2(n-2)},\qquad
    q_a=\frac{13}{20(n-2)}.
\end{equation*}
These actions retain the original third action's total student and teacher probabilities, and each has the same log-ratio $\log(13/10)$.
Their combined contribution to the reward--signal covariance is therefore unchanged.
Both expected rewards are also unchanged, and full support is preserved.
The counterexample consequently holds for every $|\mathcal V|\geq3$.
\end{proof}

\subsection{Proof of Proposition~\ref{prop:CoCA-refinement}}
\label{app:proof-3}

\begin{cocarestated}{prop:CoCA-refinement}{Grounding-preserving resolution improvement}
For any $\beta\geq0$, CoCA preserves each criterion's budget and the sign of its token-level contributions, with
$\widehat{\mathbf C}_i(\beta)\in\mathcal C_i(\mathbf a_i)$.
For each selected criterion $j\in\mathcal S_i$ with $\Delta_{ij}>0$, the following holds whenever
$0<\beta<8\Delta_{ij}/b_{ij}^2$:
\begin{equation*}
    \textstyle
    \frac{1}{T_i}
    \sum_{t\notin\mathcal T_{ij}}
    \left|[\widehat{\mathbf C}_i(\beta)]_{j,t}\right|
    <
    |A_{ij}|
    \left(1-\frac{|\mathcal T_{ij}|}{T_i}\right).
    \tag{\ref{eq:CoCA-refinement}}
\end{equation*}
The right-hand side gives the same quantity under uniform allocation.
\end{cocarestated}

\begin{proof}
Fix a rollout $i$ and hold its criterion budgets, selected set, relevance scores, and relevant token sets fixed as $\beta$ varies.
We first prove budget and sign preservation, then compare the credit magnitude assigned to irrelevant positions with its value under uniform allocation.

\emph{Budget and sign preservation.}
Write $\omega_{ij,t}(\beta)=[\boldsymbol\omega_{ij}(\beta)]_t$.
For each selected criterion $j\in\mathcal S_i$, positivity of the exponential and normalization of the weights give
\begin{equation*}
    \omega_{ij,t}(\beta)
    =\frac{T_i e^{\beta d_{ij,t}}}{\sum_{u=1}^{T_i}e^{\beta d_{ij,u}}}>0,
    \qquad
    \frac1{T_i}\sum_{t=1}^{T_i}\omega_{ij,t}(\beta)=1.
\end{equation*}
For $j\notin\mathcal S_i$, the weights equal one, so positivity and normalization hold for every criterion.
Multiplication by the fixed budget then gives the following row-mean identity:
\begin{equation*}
    \frac1{T_i}\sum_{t=1}^{T_i}[\widehat{\mathbf C}_i(\beta)]_{j,t}
    =A_{ij}\frac1{T_i}\sum_{t=1}^{T_i}\omega_{ij,t}(\beta)
    =A_{ij}.
\end{equation*}
These row-mean identities show $\widehat{\mathbf C}_i(\beta)\in\mathcal C_i(\mathbf a_i)$.
Positive weights ensure that each nonzero row has its budget's sign, while zero budgets yield zero rows.
Taking absolute values rowwise and summing the signed credits over criteria gives
\begin{equation*}
    \frac1{T_i}\sum_{t=1}^{T_i}\left|[\widehat{\mathbf C}_i(\beta)]_{j,t}\right|=|A_{ij}|,
    \qquad
    \frac1{T_i}\sum_{t=1}^{T_i}\widehat A_{i,t}
    =\sum_{j=1}^{M}A_{ij}=A_i.
\end{equation*}
Thus, redistribution also preserves each row's total credit magnitude and the response-level mean advantage.
At $\beta=0$, all weights equal one, recovering uniform allocation.

\emph{Reduction to a comparison of exponential averages.}
Fix a selected criterion $j\in\mathcal S_i$ with $\Delta_{ij}>0$.
Because selection is restricted to nonzero budgets, $A_{ij}\neq0$.
For readability, write $T=T_i$, $A=A_{ij}$, $d_t=d_{ij,t}$, $\Delta=\Delta_{ij}$, and $b=b_{ij}$.
Let $\mathcal T=\mathcal T_{ij}$ and $\mathcal N=\{1,\ldots,T\}\setminus\mathcal T$.
The assumption $\varnothing\neq\mathcal T\subsetneq\{1,\ldots,T\}$ ensures
$m=|\mathcal T|>0$, $n=|\mathcal N|>0$, and $m+n=T$.
The relevant set is specified without using the scores, and its mean-score advantage is
\begin{equation*}
    \mu_{\mathcal T}=\frac1m\sum_{t\in\mathcal T}d_t,
    \qquad
    \mu_{\mathcal N}=\frac1n\sum_{t\in\mathcal N}d_t,
    \qquad
    \Delta=\mu_{\mathcal T}-\mu_{\mathcal N}>0.
\end{equation*}
Define the mean exponential scores on the two sets as
\begin{equation*}
    U(\beta)=\frac1m\sum_{t\in\mathcal T}e^{\beta d_t},
    \qquad
    V(\beta)=\frac1n\sum_{t\in\mathcal N}e^{\beta d_t}.
\end{equation*}
Substituting the normalized allocation weights gives the off-target credit magnitude
\begin{equation*}
    L(\beta)
    :=\frac1T\sum_{t\in\mathcal N}
      \left|[\widehat{\mathbf C}_i(\beta)]_{j,t}\right|
    =|A|\frac{\sum_{t\in\mathcal N}e^{\beta d_t}}
      {\sum_{u=1}^{T}e^{\beta d_u}}
      =|A|\frac{nV(\beta)}{mU(\beta)+nV(\beta)}.
\end{equation*}
Under uniform allocation, $U(0)=V(0)=1$, so $L(0)=|A|n/T$.
As $|A|$, $m$, $n$, and both exponential averages are positive,
$L(\beta)<L(0)$ is equivalent to $U(\beta)>V(\beta)$.
It remains to show that the positive mean-score gap guarantees this ordering under the stated restriction on $\beta$.

\emph{Controlling the effect of exponentiation.}
For the relevant positions, Jensen's inequality gives
\begin{equation*}
    U(\beta)=\frac1m\sum_{t\in\mathcal T}e^{\beta d_t}
    \geq\exp\!\left(\frac\beta m\sum_{t\in\mathcal T}d_t\right)
    =e^{\beta\mu_{\mathcal T}}.
\end{equation*}
For the irrelevant positions, we also need to control the increase in the exponential average caused by variation in the scores.
For $s\geq0$, define
\begin{equation*}
    \phi(s)=\log\!\left(\frac1n\sum_{t\in\mathcal N}e^{s d_t}\right),
    \qquad
    \nu_s(t)=\frac{e^{s d_t}}{\sum_{u\in\mathcal N}e^{s d_u}}.
\end{equation*}
Here $\nu_s$ is a probability distribution over the fixed set $\mathcal N$.
Differentiating the finite sum twice gives
\begin{equation*}
\begin{aligned}
\phi'(s)
    &=\frac{\sum_{t\in\mathcal N}d_t e^{s d_t}}
      {\sum_{t\in\mathcal N}e^{s d_t}}
      =\mathbb E_{t\sim\nu_s}[d_t],\\
\phi''(s)
    &=\frac{\sum_{t\in\mathcal N}d_t^2 e^{s d_t}}
      {\sum_{t\in\mathcal N}e^{s d_t}}
      -\left(\frac{\sum_{t\in\mathcal N}d_t e^{s d_t}}
      {\sum_{t\in\mathcal N}e^{s d_t}}\right)^2
=\operatorname{Var}_{t\sim\nu_s}(d_t).
\end{aligned}
\end{equation*}
To bound this variance uniformly in $s$, let
$d_{\min}=\min_t d_t$, $d_{\max}=\max_t d_t$, and
$c=(d_{\min}+d_{\max})/2$.
The score range is $b=d_{\max}-d_{\min}$, so $|d_t-c|\leq b/2$ for every position.
Therefore,
\begin{equation*}
    \phi''(s)
    =\mathbb E_{\nu_s}[(d_t-c)^2]
      -\bigl(\mathbb E_{\nu_s}[d_t]-c\bigr)^2
    \leq\mathbb E_{\nu_s}[(d_t-c)^2]\leq\frac{b^2}{4}.
\end{equation*}
At $s=0$, the distribution $\nu_0$ is uniform on $\mathcal N$, giving
$\phi(0)=0$ and $\phi'(0)=\mu_{\mathcal N}$.
Applying Taylor's formula with integral remainder and the uniform variance bound yields
\begin{equation*}
\begin{aligned}
\log V(\beta)=\phi(\beta)
    &=\beta\mu_{\mathcal N}
      +\int_0^\beta(\beta-s)\phi''(s)\,\mathrm ds\\
&\leq\beta\mu_{\mathcal N}
      +\frac{b^2}{4}\int_0^\beta(\beta-s)\,\mathrm ds
=\beta\mu_{\mathcal N}+\frac{\beta^2b^2}{8}.
\end{aligned}
\end{equation*}

\emph{Strict improvement over uniform allocation.}
Combining the upper bound on $\log V(\beta)$ with the lower bound
$\log U(\beta)\geq\beta\mu_{\mathcal T}$ yields
\begin{equation*}
    \log\frac{U(\beta)}{V(\beta)}
    \geq\beta(\mu_{\mathcal T}-\mu_{\mathcal N})-\frac{\beta^2b^2}{8}
    =\beta\left(\Delta-\frac{\beta b^2}{8}\right).
\end{equation*}
The condition $\Delta>0$ implies $b>0$: otherwise all scores would coincide and the mean gap would be zero.
For $0<\beta<8\Delta/b^2$, the right-hand side is strictly positive, and hence $U(\beta)>V(\beta)$.
This bound ensures that the mean-score advantage survives exponentiation.

Finally, substituting $U(\beta)>V(\beta)$ into the off-target magnitude gives
\begin{equation*}
    L(\beta)
    =|A|\frac{nV(\beta)}{mU(\beta)+nV(\beta)}
    <|A|\frac{nV(\beta)}{mV(\beta)+nV(\beta)}
      =|A|\frac nT
      =|A|\left(1-\frac mT\right).
\end{equation*}
Strictness uses $|A|>0$, $m,n>0$, and $V(\beta)>0$.
Restoring the indices proves the claimed reduction relative to uniform allocation.
Because the comparison uses absolute credit while every row retains its budget sign, it applies equally to positive and negative criterion budgets.
\end{proof}

\section{Additional Experimental Details}
\label{app:exp_detail}

\subsection{Datasets, Benchmarks, and Evaluation}
\label{app:eval}

\paragraph{Training datasets.}
We focus on medical and scientific QA tasks, training a separate policy for each domain on RaR-Medicine and RaR-Science~\citep{rar}.
Both datasets pair reasoning questions with instance-specific rubrics that identify essential, important, optional, and pitfall criteria.
The medical data cover diagnosis, treatment, and related medical reasoning; the science data cover scientific knowledge and multi-step problem solving.
We retain the criterion text and signed point values in the prepared data.
The training splits contain 17,926 medical questions and 18,333 scientific questions, with an average of 7.46 and 7.51 criteria per question, respectively.

\paragraph{Evaluation benchmarks.}
We evaluate each domain-specific policy on its corresponding RaR test split (2,242 questions for medicine and 2,292 for science) and on additional benchmarks that assess generalization across question sources and task requirements.

We use three additional medical benchmarks.
(1)~\textbf{HealthBench}~\citep{healthbench} provides 5,000 realistic medical conversations with physician-authored rubrics; we evaluate all final assistant responses in the context of preceding messages.
(2)~\textbf{\mbox{LLMEval-Med}}~\citep{llmevalmed} assesses clinical knowledge, understanding, reasoning, safety, and generation; we use its 667-question public test set.
(3)~\textbf{MedQA}~\citep{medqa} contains medical board-examination questions; we use its 1,273-question, four-option English test split.

The scientific suite includes two additional benchmarks.
(1)~\textbf{ResearchQA}~\citep{researchqa} evaluates answers to research questions using question-specific criteria derived from survey literature.
We use a prepared subset of 3,750 questions and preserve the original task instructions, including any citation, length, or temporal constraints.
(2)~\textbf{\mbox{GPQA-D}}~\citep{gpqa} is the Diamond subset of GPQA, comprising 198 expert-written multiple-choice questions in biology, chemistry, and physics.

\paragraph{Evaluation criteria.}
Each benchmark is evaluated using rubric criteria with associated point values.
RaR-Medicine, RaR-Science, and HealthBench assess response quality through weighted criteria, including negatively weighted penalties.
ResearchQA criteria describe the expected answer content and carry unit weights.
For LLMEval-Med, clinical usability is assessed through a single unit-weight criterion based on the expert reference answer and checklist.
The reported metric is the clinical-usability pass rate, with a passing standard corresponding to a rating of at least 4 out of 5.
MedQA and \mbox{GPQA-D} assess final-answer correctness through a single unit-weight criterion.
All criteria are assessed using the grading procedure described below.

\begin{table}[!t]
\centering
\caption{\small Training and evaluation configuration.}
\label{tab:appendix-training}
\small
\setlength{\tabcolsep}{5pt}
\renewcommand{\arraystretch}{1.08}
\begin{tabular}{@{}p{0.17\linewidth}p{\dimexpr0.83\linewidth-2\tabcolsep\relax}@{}}
\toprule
\textbf{Settings} & \textbf{Hyperparameters} \\
\midrule
\textbf{CoCA} & \raggedright
$k=2$, $\beta=5$ (medicine), $\beta=3$ (science) \tabularnewline
\midrule
\textbf{Sampling} & \raggedright
rollout\_n $=8$, top\_k $=-1$, top\_p $=1.0$, temperature $=1.0$\newline
max\_prompt\_length $=2{,}048$, max\_response\_length $=4{,}096$\newline
enable\_thinking $=\mathrm{False}$ \tabularnewline
\midrule
\textbf{Training} & \raggedright
ppo\_mini\_batch\_size $=32$, ppo\_micro\_batch\_size\_per\_gpu $=2$\newline
log\_prob\_micro\_batch\_size\_per\_gpu $=16$, train\_batch\_size $=64$\newline
AdamW, learning\_rate $=10^{-6}$, weight\_decay $=0.01$\newline
schedule $=\mathrm{constant}$, warmup ratio $=0$, grad\_clip $=1.0$\newline
ppo\_epochs $=1$, clip\_ratio $=0.2$, training\_steps $=400$\newline
entropy\_coeff $=0$, use\_kl\_loss $=\mathrm{False}$, use\_kl\_in\_reward $=\mathrm{False}$ \tabularnewline
\midrule
\textbf{Evaluation} & \raggedright
top\_p $=0.95$, temperature $=0.6$, 
max\_response\_length $=4{,}096$ \tabularnewline
\midrule
\textbf{Optimizations} & \raggedright
FSDP, vLLM (tensor parallel size $=2$), FlashAttention-2\newline
bf16 mixed precision, gradient checkpointing, padding removal\newline
Actor parameter and optimizer offload: disabled \tabularnewline
\bottomrule
\end{tabular}
\end{table}

\paragraph{LLM-based evaluation.}
We use gpt-oss-120b to evaluate all benchmarks and methods, with temperature $0$ and a 4,096-token output limit.
Given the task prompt, generated response, and rubric, the judge assesses all criteria in one call using the template in Figure~\ref{fig:appendix-judge-prompt}.
For each evaluation example $e$, let $s_{ej}$ denote the point value of criterion $j$ and $z_{ej}\in\{0,1\}$ its satisfaction indicator.
We compute HealthBench-style response and benchmark scores as
\begin{equation}
    P_e=\sum_{j:s_{ej}>0}s_{ej},
    \qquad
    r_e=\max\!\left\{0,\frac{\sum_j s_{ej}z_{ej}}{P_e}\right\},
    \qquad
    \operatorname{Score}(\mathcal D)=\frac{100}{|\mathcal D|}\sum_{e\in\mathcal D}r_e.
    \label{eq:appendix-eval-score}
\end{equation}
To reduce sampling variability, eight responses per question are generated for GPQA-D (avg@8) and two for LLMEval-Med (avg@2), with binary scores averaged across all questions and samples.
All other benchmarks use one response per question (avg@1).
Benchmark scores are reported as percentages, and the overall score is the unweighted mean of the seven benchmark scores.

\begin{figure}[!t]
\centering
\includegraphics[width=\linewidth]{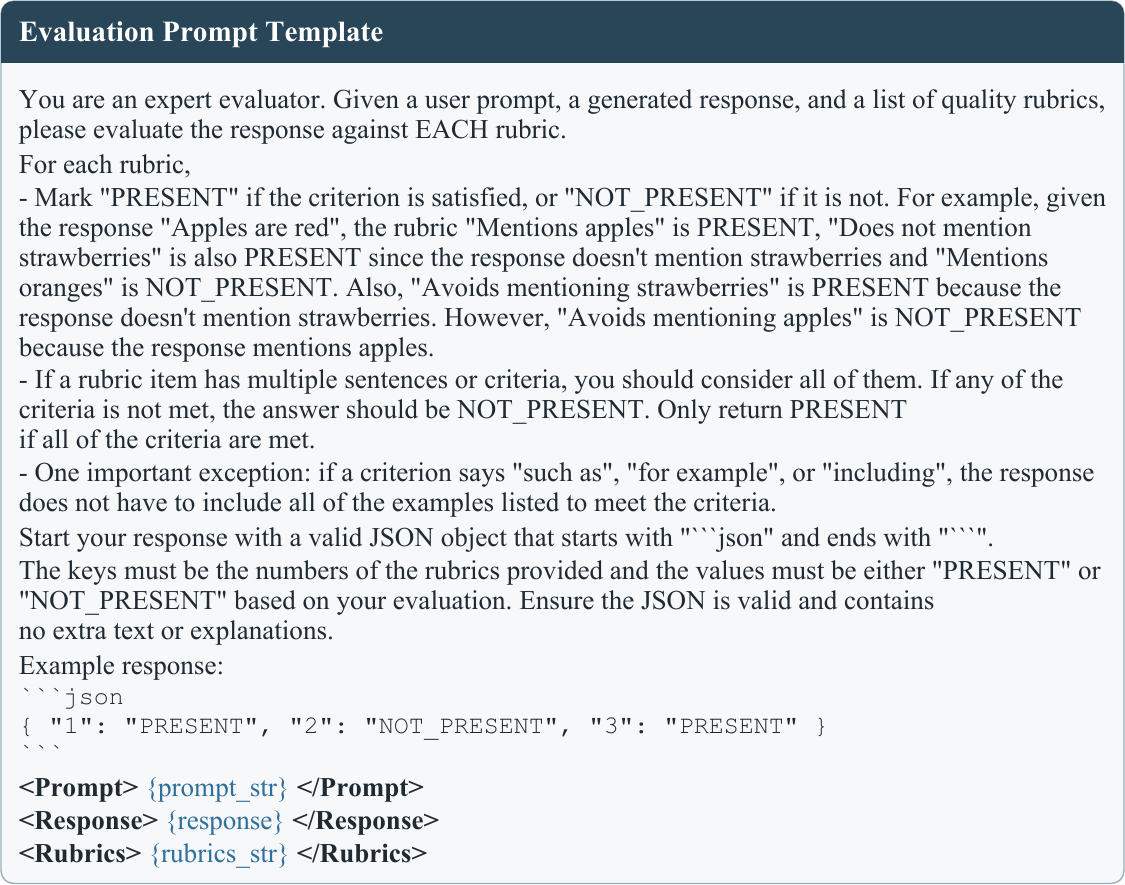}
\caption{\small Prompt template for the LLM judge. Blue placeholders denote the task prompt, model response, and numbered rubric criteria.}
\label{fig:appendix-judge-prompt}
\end{figure}

\subsection{Training and Implementation Details}
\label{app:train}

\paragraph{Optimization and decoding.}
We use \texttt{verl}~\citep{verl} with FSDP for policy training and vLLM for response generation.
Table~\ref{tab:appendix-training} summarizes the optimization and decoding settings.
We disable top-$k$ truncation during rollout sampling.
Evaluation responses are generated using only the task context, with rubric criteria reserved for the judge.

\paragraph{Criterion-conditioned inputs.}
For each selected criterion, we append the corresponding instruction in Figure~\ref{fig:appendix-criterion-prompt} to the final user message, separated from the original content by a blank line.
All preceding conversation messages are retained.
The template is selected according to the sign of the criterion's original point value $s_j$, independently of the sign of its group-relative credit budget $A_{ij}$.

\begin{figure}[!t]
\centering
\includegraphics[width=\linewidth]{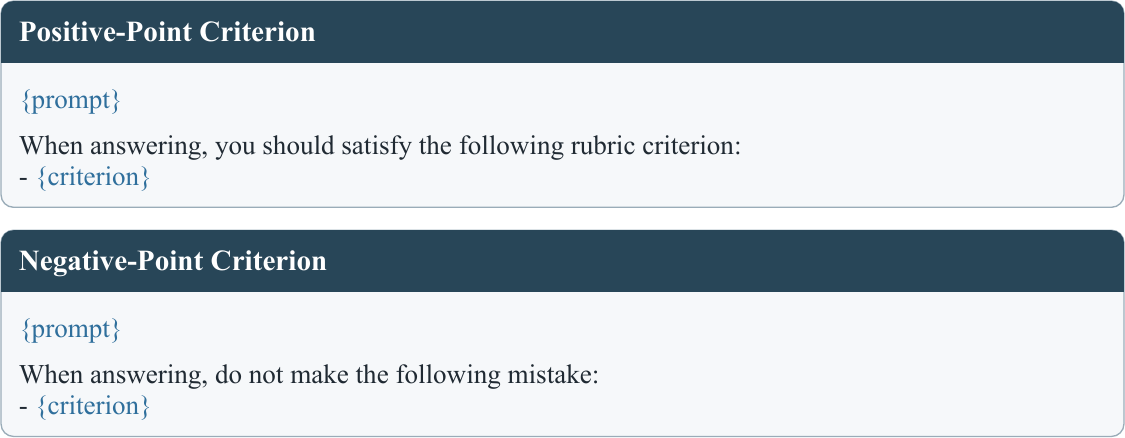}
\caption{\small Prompt templates for criterion-conditioned forward passes in CoCA. Blue placeholders denote the original final user message and the selected criterion.}
\label{fig:appendix-criterion-prompt}
\end{figure}

\paragraph{Response alignment and masking.}
We apply the model's chat template to each conditioned conversation and append the original sampled response token IDs.
This preserves the response sequence across scoring passes despite differences in context length.
We extract the logits that predict response tokens and align them by response position before computing relevance.
The response mask excludes prompt and padding positions from token-weight normalization and the policy loss.

\subsection{Baseline Reproduction}
\label{app:baseline}

We reproduce RaR~\citep{rar}, RuscaRL~\citep{ruscarl}, HeRL~\citep{herl}, POW3R~\citep{pow3r}, and CoRT~\citep{cort} following their official implementations or the standard procedures described in the original papers.

\paragraph{OPSD.}
We implement rubric-conditioned OPSD using the self-teacher scoring and distillation components of the official SDPO codebase~\citep{sdpo}.
For each question, the privileged information consists of the complete set of rubric criteria.
The student generates a response from the ordinary task prompt, and the self-teacher scores the same response under a context augmented with these criteria.
Teacher predictions are conditioned on the sampled response prefix and detached from backpropagation.
OPSD alone optimizes the reverse-KL objective in Eq.~(\ref{eq:opsd}) over response tokens.

\paragraph{RaR+OPSD.}
We combine the RaR policy loss with the OPSD loss:
\begin{equation}
    \mathcal L_{\mathrm{RaR+OPSD}}
    =\mathcal L_{\mathrm{GRPO}}+\lambda\mathcal L_{\mathrm{OPSD}},
    \qquad \lambda=0.01.
    \label{eq:appendix-hybrid-loss}
\end{equation}
Here, $\mathcal L_{\mathrm{GRPO}}$ is the policy loss in Eq.~(\ref{eq:grpo}), computed using response-level advantages derived from rubric rewards, and $\mathcal L_{\mathrm{OPSD}}$ is the teacher-matching loss in Eq.~(\ref{eq:opsd}).
Both terms are computed on the same sampled responses and jointly optimized in each policy update.
The student receives the ordinary task prompt, with rubric information confined to the teacher context.
We select $\lambda=0.01$ according to the observed gradient magnitudes: during training, the gradient norm of the weighted distillation term $\lambda\mathcal L_{\mathrm{OPSD}}$ is approximately one fifth of that of the policy loss $\mathcal L_{\mathrm{GRPO}}$.
This keeps distillation active as an auxiliary objective while retaining policy optimization as the primary training signal.
An excessively large $\lambda$ can let distillation dominate the combined objective, whereas an excessively small one can make its contribution negligible.

\section{Additional Experimental Results}
\label{app:exp_result}

\subsection{Results Across Base Models and Judges}
\label{app:model}
\paragraph{Consistent gains across base models.}
To examine whether our findings extend beyond Qwen3-8B, we evaluate CoCA on Qwen3-4B~\citep{qwen3} and Olmo3-7B-Instruct-SFT~\citep{olmo3} against three representative baselines: RaR, RaR+OPSD, and HeRL, using gpt-oss-120b as the judge.
As shown in Table~\ref{tb:main_add}, CoCA achieves average scores of $58.0\%$ and $41.5\%$, exceeding RaR by $4.1\%$ and $1.3\%$, respectively.
CoCA also surpasses the strongest baseline HeRL, by $2.4\%$ and $1.4\%$ on average.
This consistent advantage supports the benefit of resolving criterion-level credit beyond response-level rubric rewards.
Meanwhile, RaR+OPSD underperforms RaR on all seven benchmarks for both models, with average declines of $1.2\%$ and $2.1\%$, reinforcing our finding that an auxiliary rubric-conditioned distillation objective can undermine reward-driven learning.
Together, these results reproduce the main-table trends across model scales and architectures, supporting the effectiveness of outcome-grounded criterion-level credit assignment beyond the default base model.

\begin{table*}[!t]
\caption{\small Experimental results (\%) for Qwen3-4B and Olmo3-7B-Instruct-SFT with gpt-oss-120b as the judge.
Avg.\ denotes the mean across all seven benchmarks.
The best results within each model block are highlighted in \textbf{bold}.
Arrows indicate \textcolor{OliveGreen}{improvements ($\uparrow$)} or \textcolor{Maroon}{declines ($\downarrow$)} relative to the corresponding initial model.}
\centering
\renewcommand{\arraystretch}{1.3}
\setlength{\tabcolsep}{4pt}
\resizebox{\textwidth}{!}{
\begin{tabular}{l|c@{\hspace{5pt}}c@{\hspace{5pt}}c@{\hspace{4pt}}>{\columncolor{white}[4pt][8pt]}c@{\hspace{12pt}}c@{\hspace{6pt}}>{\columncolor{white}[4pt][6pt]}c@{\hspace{10pt}}>{\columncolor{white}[4pt][7pt]}c@{\hspace{7pt}}|@{\hspace{7pt}}>{\columncolor{white}[7pt][4pt]}c}
\toprule
\multirow{2}{*}{\textbf{Method}}
& \multicolumn{4}{c@{\hspace{12pt}}}{\textbf{Medicine}}
& \multicolumn{3}{@{}c@{\hspace{7pt}}|@{\hspace{7pt}}}{\textbf{Science}}
& \multirow{2}{*}{\textbf{Avg.}} \\
\cmidrule(l{4pt}r{12pt}){2-5}\cmidrule(l{0pt}r{14.4pt}){6-8}
& \textbf{RaR-Medicine}
& \textbf{HealthBench}
& \textbf{LLMEval-Med}
& \hspace*{-1pt}\textbf{MedQA}\hspace*{1pt}
& \textbf{RaR-Science}
& \textbf{ResearchQA}
& \hspace*{-2pt}\textbf{GPQA-D}\hspace*{2pt}
& \\
\midrule
\multicolumn{9}{c}{Qwen3-4B} \\
\midrule
Initial & 45.6 \basex{0.0} & 37.8 \basex{0.0} & 48.3 \basex{0.0} & 66.0 \basex{0.0} & 64.3 \basex{0.0} & 43.8 \basex{0.0} & 43.2 \basex{0.0} & 49.9 \basex{0.0} \\
RaR & 55.2 \uprise{9.6} & 43.6 \uprise{5.8} & 50.3 \uprise{2.0} & 66.2 \uprise{0.2} & 69.2 \uprise{4.9} & 48.3 \uprise{4.5} & 44.6 \uprise{1.4} & 53.9 \uprise{4.0} \\
RaR+OPSD & 52.4 \uprise{6.8} & 42.2 \uprise{4.4} & 49.1 \uprise{0.8} & 65.8 \down{0.2} & 68.0 \uprise{3.7} & 47.6 \uprise{3.8} & 43.7 \uprise{0.5} & 52.7 \uprise{2.8} \\
HeRL & 58.0 \uprise{12.4} & 45.7 \uprise{7.9} & 51.9 \uprise{3.6} & 68.7 \uprise{2.7} & 70.3 \uprise{6.0} & 49.0 \uprise{5.2} & 45.4 \uprise{2.2} & 55.6 \uprise{5.7} \\
\rowcolor{gray!15}
\textbf{CoCA (ours)} & \textbf{61.9} \uprise{16.3} & \textbf{49.1} \uprise{11.3} & \textbf{54.0} \uprise{5.7} & \textbf{70.6} \uprise{4.6} & \textbf{72.1} \uprise{7.8} & \textbf{50.7} \uprise{6.9} & \textbf{47.9} \uprise{4.7} & \textbf{58.0} \uprise{8.1} \\
\midrule
\multicolumn{9}{c}{Olmo3-7B-Instruct-SFT} \\
\midrule
Initial & 31.8 \basex{0.0} & 26.1 \basex{0.0} & 12.0 \basex{0.0} & 49.3 \basex{0.0} & 52.6 \basex{0.0} & 39.3 \basex{0.0} & 32.2 \basex{0.0} & 34.8 \basex{0.0} \\
RaR & 43.0 \uprise{11.2} & 32.8 \uprise{6.7} & 13.5 \uprise{1.5} & 51.8 \uprise{2.5} & 62.0 \uprise{9.4} & 45.8 \uprise{6.5} & 32.6 \uprise{0.4} & 40.2 \uprise{5.4} \\
RaR+OPSD & 40.1 \uprise{8.3} & 31.0 \uprise{4.9} & 11.3 \down{0.7} & 49.7 \uprise{0.4} & 61.3 \uprise{8.7} & 44.8 \uprise{5.5} & 28.6 \down{3.6} & 38.1 \uprise{3.3} \\
HeRL & 42.7 \uprise{10.9} & 32.4 \uprise{6.3} & \textbf{15.8} \uprise{3.8} & 50.9 \uprise{1.6} & 62.6 \uprise{10.0} & 45.7 \uprise{6.4} & 30.7 \down{1.5} & 40.1 \uprise{5.3} \\
\rowcolor{gray!15}
\textbf{CoCA (ours)} & \textbf{44.1} \uprise{12.3} & \textbf{33.9} \uprise{7.8} & 15.5 \uprise{3.5} & \textbf{53.5} \uprise{4.2} & \textbf{63.1} \uprise{10.5} & \textbf{47.1} \uprise{7.8} & \textbf{33.4} \uprise{1.2} & \textbf{41.5} \uprise{6.7} \\
\bottomrule
\end{tabular}
}
\label{tb:main_add}
\end{table*}

\paragraph{Consistent gains across judge models.}
We extend the evaluation to DeepSeek-V4.1-Flash~\citep{deepseek4.1} and rescore the same set of responses from all Qwen3-8B-based methods, keeping all other evaluation settings unchanged.
As shown in Table~\ref{tb:main_ds}, CoCA outperforms all baselines on all seven benchmarks, achieving an average score of $64.8\%$.
It improves over RaR, the standard rubric-based RL baseline, by an absolute $3.4\%$, and surpasses HeRL, the strongest baseline on average, by $2.5\%$.
These gains are comparable to the $4.2\%$ and $3.1\%$ margins under gpt-oss-120b.
Although the absolute scores vary across judges, the relative performance gaps among methods remain broadly consistent.
This consistency supports the robustness of CoCA's advantage across judge models.

\begin{table*}[!t]
\caption{\small Experimental results (\%) on medical and scientific benchmarks, using Qwen3-8B as the base model and DeepSeek-V4.1-Flash as the judge. Avg.\ denotes the mean score across all seven benchmarks, and the best results are highlighted in \textbf{bold}. Arrows indicate \textcolor{OliveGreen}{improvements ($\uparrow$)} or \textcolor{Maroon}{declines ($\downarrow$)} relative to the initial model.}
\centering
\renewcommand{\arraystretch}{1.3}
\setlength{\tabcolsep}{4pt}
\resizebox{\textwidth}{!}{
\begin{tabular}{l|c@{\hspace{5pt}}c@{\hspace{5pt}}c@{\hspace{4pt}}>{\columncolor{white}[4pt][8pt]}c@{\hspace{12pt}}c@{\hspace{6pt}}>{\columncolor{white}[4pt][6pt]}c@{\hspace{10pt}}>{\columncolor{white}[4pt][7pt]}c@{\hspace{7pt}}|@{\hspace{7pt}}>{\columncolor{white}[7pt][4pt]}c}
\toprule
\multirow{2}{*}{\textbf{Method}}
& \multicolumn{4}{c@{\hspace{12pt}}}{\textbf{Medicine}}
& \multicolumn{3}{@{}c@{\hspace{7pt}}|@{\hspace{7pt}}}{\textbf{Science}}
& \multirow{2}{*}{\textbf{Avg.}} \\
\cmidrule(l{4pt}r{12pt}){2-5}\cmidrule(l{0pt}r{14.4pt}){6-8}
& \textbf{RaR-Medicine}
& \textbf{HealthBench}
& \textbf{LLMEval-Med}
& \hspace*{-1pt}\textbf{MedQA}\hspace*{1pt}
& \textbf{RaR-Science}
& \textbf{ResearchQA}
& \hspace*{-2pt}\textbf{GPQA-D}\hspace*{2pt}
& \\
\midrule
Initial
& 57.9 \basex{0.0}
& 52.1 \basex{0.0}
& 44.3 \basex{0.0}
& 72.0 \basex{0.0}
& 67.7 \basex{0.0}
& 51.1 \basex{0.0}
& 49.7 \basex{0.0}
& 56.4 \basex{0.0} \\
RaR
& 68.2 \uprise{10.3}
& 57.5 \uprise{5.4}
& 47.4 \uprise{3.1}
& 73.8 \uprise{1.8}
& 73.1 \uprise{5.4}
& 58.0 \uprise{6.9}
& 51.6 \uprise{1.9}
& 61.4 \uprise{5.0} \\
OPSD
& 48.9 \down{9.0}
& 38.2 \down{13.9}
& 42.9 \down{1.4}
& 69.0 \down{3.0}
& 66.1 \down{1.6}
& 48.7 \down{2.4}
& 46.6 \down{3.1}
& 51.5 \down{4.9} \\
RaR+OPSD
& 64.6 \uprise{6.7}
& 52.3 \uprise{0.2}
& 44.6 \uprise{0.3}
& 72.1 \uprise{0.1}
& 68.3 \uprise{0.6}
& 51.2 \uprise{0.1}
& 48.8 \down{0.9}
& 57.4 \uprise{1.0} \\
RuscaRL
& 60.1 \uprise{2.2}
& 54.5 \uprise{2.4}
& 44.9 \uprise{0.6}
& 73.8 \uprise{1.8}
& 66.4 \down{1.3}
& 54.1 \uprise{3.0}
& 44.4 \down{5.3}
& 56.9 \uprise{0.5} \\
HeRL
& 70.6 \uprise{12.7}
& 60.3 \uprise{8.2}
& 48.6 \uprise{4.3}
& 73.3 \uprise{1.3}
& 72.6 \uprise{4.9}
& 58.9 \uprise{7.8}
& 51.6 \uprise{1.9}
& 62.3 \uprise{5.9} \\
POW3R
& 69.6 \uprise{11.7}
& 58.5 \uprise{6.4}
& 48.1 \uprise{3.8}
& 72.4 \uprise{0.4}
& 72.3 \uprise{4.6}
& 59.7 \uprise{8.6}
& 52.0 \uprise{2.3}
& 61.8 \uprise{5.4} \\
CoRT
& 68.9 \uprise{11.0}
& 57.8 \uprise{5.7}
& 47.8 \uprise{3.5}
& 75.9 \uprise{3.9}
& 71.6 \uprise{3.9}
& 58.4 \uprise{7.3}
& 47.7 \down{2.0}
& 61.2 \uprise{4.8} \\
\rowcolor{gray!15}
\textbf{CoCA (ours)}
& \textbf{73.1} \uprise{15.2}
& \textbf{63.3} \uprise{11.2}
& \textbf{51.1} \uprise{6.8}
& \textbf{77.6} \uprise{5.6}
& \textbf{74.2} \uprise{6.5}
& \textbf{60.0} \uprise{8.9}
& \textbf{54.4} \uprise{4.7}
& \textbf{64.8} \uprise{8.4} \\
\bottomrule
\end{tabular}
}
\label{tb:main_ds}
\end{table*}

\subsection{Quantitative Validation of Criterion--Token Relevance}
\label{app:relevance-validation}

\paragraph{Annotation protocol.}
To test whether CoCA's criterion-induced distribution shifts localize the parts of a response relevant to each rubric criterion, we randomly sample 500 criterion--response pairs from each domain (Medicine and Science).
For each pair, GPT-6 Astra annotates words that directly express criterion-relevant content or functionally support it, such as a heading or transition introducing that content.
We then manually review 100 pairs per domain ($20\%$) to check the quality of these annotations.
For analysis, we map the word-level spans to the corresponding response-token positions.
Figure~\ref{fig:annotation} presents one complete response from each domain.
Dark blue marks annotated words and pale blue marks the remaining words.
We refer to the unmarked positions in the eligible response-token set as \emph{remaining tokens}.

\begin{figure}[!t]
\centering
\includegraphics[width=\linewidth]{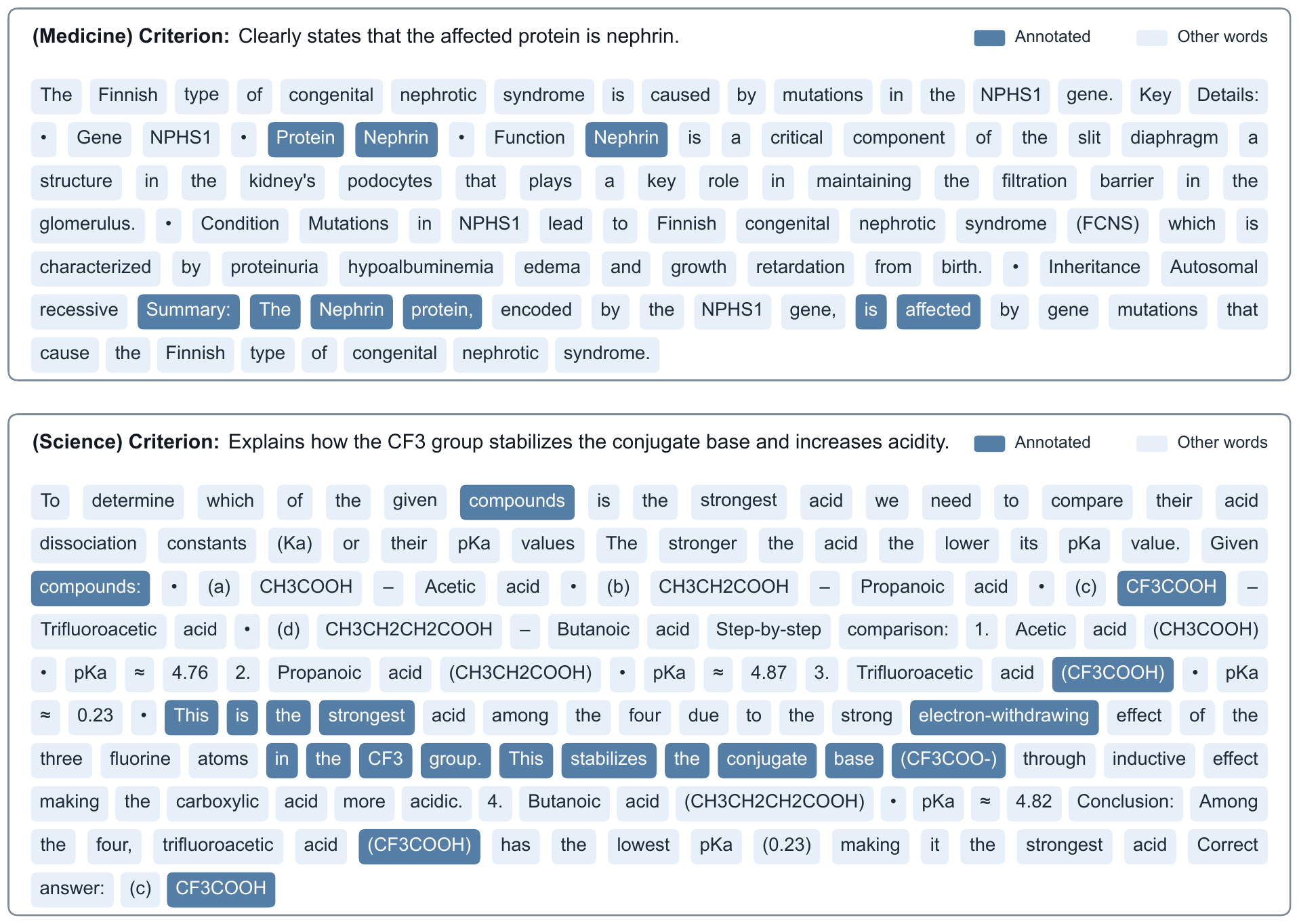}
\caption{\small Complete word-level annotation examples from Medicine (top) and Science (bottom), with the corresponding rubric criterion shown above each response.
Dark and pale blue denote annotated and remaining words, respectively.
Subword pieces are merged into words for display.}
\label{fig:annotation}
\end{figure}

\paragraph{Pair-level separation.}
For each criterion--response pair $p$, we compute the mean JSD over annotated positions $\mathcal T_p$ and remaining eligible positions $\mathcal N_p$, together with their enrichment ratio:
\begin{equation}
    \mu_p^{\mathrm{ann}}
    = \frac{1}{|\mathcal T_p|}\sum_{t\in\mathcal T_p} d_{p,t},
    \qquad
    \mu_p^{\mathrm{rem}}
    = \frac{1}{|\mathcal N_p|}\sum_{t\in\mathcal N_p} d_{p,t},
    \qquad
    r_p
    = \frac{\mu_p^{\mathrm{ann}}}{\mu_p^{\mathrm{rem}}}.
\label{eq:relevance-validation-metrics}
\end{equation}
Here, $d_{p,t}$ denotes the JSD relevance score at position $t$.
We report the \emph{positive-gap rate}, defined as the percentage of pairs satisfying $\mu_p^{\mathrm{ann}}>\mu_p^{\mathrm{rem}}$.
For each threshold $\tau$, we also report the percentage of pairs with an enrichment ratio $r_p\geq\tau$ to quantify the magnitude of this separation.
As shown in Figure~\ref{fig:relevance}, the positive-gap rate is $97.6\%$ in Medicine and $100.0\%$ in Science.
The fractions with $r_p\geq1.5$ are $71.0\%$ and $81.2\%$, those with $r_p\geq2.0$ are $47.0\%$ and $61.6\%$, and those with $r_p\geq3.0$ are $19.2\%$ and $31.6\%$, respectively.
Overall, $988$ of the $1{,}000$ pairs ($98.8\%$) satisfy $\Delta_p=\mu_p^{\mathrm{ann}}-\mu_p^{\mathrm{rem}}>0$.
These results show that JSD is higher on average at criterion-relevant positions, supporting its use for criterion-level credit allocation.
The observed separation also provides empirical support for the positive mean-gap condition underlying the credit-resolution guarantee in Proposition~\ref{prop:CoCA-refinement}.

\subsection{Analysis of Training Efficiency}
\label{app:efficiency}

\paragraph{Performance versus training time.}
We evaluate Qwen3-8B trained with CoCA and RaR on the Medicine validation set every 25 steps over 400 training steps, using the same $8\times$A100 GPUs.
Given the differences in per-step cost across methods, we compare validation performance as a function of training time rather than training steps.
As shown in the left panel of Figure~\ref{fig:efficiency}, CoCA reaches $55.57\%$ after $38.77$ hours, compared with $47.83\%$ after $37.35$ hours for RaR.
This corresponds to a gain of $7.74$ percentage points with only $3.8\%$ more training time.
Moreover, CoCA achieves $49.22\%$ at step 125 after only $12.12$ hours, already surpassing RaR’s best performance at step 375 ($35.02$ hours). Thus, CoCA achieves higher performance with a $2.89\times$ speedup in training time.
These results demonstrate that the learning gains from CoCA's additional forward passes outweigh their computational overhead, yielding a net improvement in training efficiency.

\begin{figure}[!t]
\centering
\includegraphics[width=\linewidth]{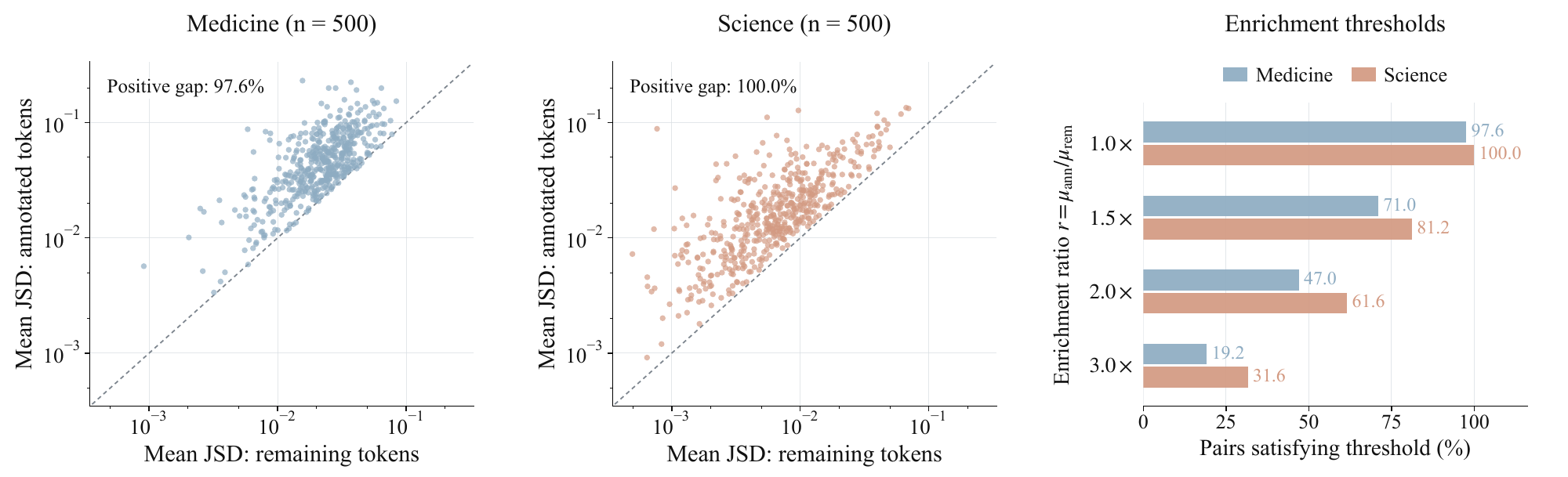}
\caption{\small Criterion--token relevance over 1,000 annotated pairs.
The scatterplots compare mean JSD at annotated and remaining positions; the dashed diagonal denotes equality.
The right panel reports the percentage of pairs meeting each enrichment-ratio threshold.}
\label{fig:relevance}
\end{figure}

\begin{figure}[!t]
\centering
\includegraphics[width=\linewidth]{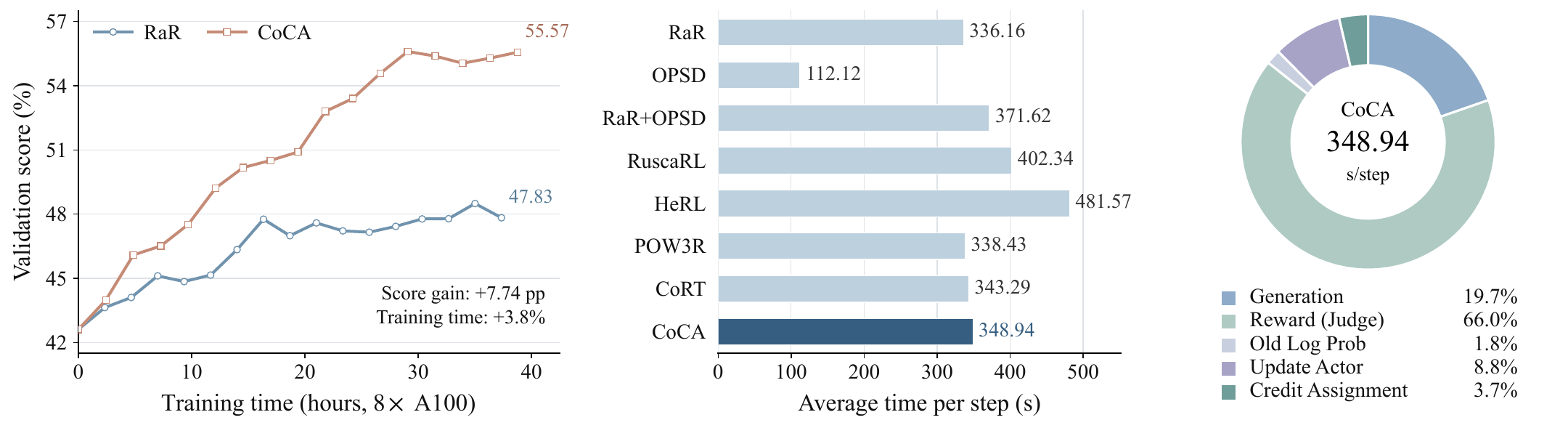}
\caption{\small Training efficiency of Qwen3-8B on Medicine using $8\times$A100 GPUs.
Left: validation performance versus training time.
Middle: average training step time for all methods.
Right: CoCA's runtime breakdown, with the total step time shown at the center and component proportions listed below.}
\label{fig:efficiency}
\vspace{-4pt}
\end{figure}

\paragraph{Per-step cost and runtime breakdown.}
To quantify CoCA's additional training cost, we measure the average step time of all eight methods under the same model, domain, and hardware configuration and profile its runtime.
As shown in the middle panel of Figure~\ref{fig:efficiency}, CoCA takes $348.94$ seconds per step compared with $336.16$ seconds for RaR, an increase of only $3.8\%$ despite its additional criterion-conditioned forward passes.
Its step time is also lower than those of RaR+OPSD, RuscaRL, and HeRL, showing that its finer-grained credit assignment remains computationally competitive with other approaches to rubric supervision.
The right panel shows that credit assignment takes only $12.78$ seconds per step ($3.7\%$ of the total), while judge-based reward evaluation and response generation together account for $85.7\%$.
Most training time is therefore spent producing responses and obtaining rubric feedback, with comparatively little devoted to allocating that feedback across tokens.
This is consistent with CoCA's design: it reuses criterion-level judgments from reward evaluation and restricts additional criterion-conditioned relevance scoring to selected criteria.
Together with the gains in learning efficiency, these results indicate that CoCA makes more effective use of costly rubric feedback with only modest additional computation.

\subsection{Analysis of Training Dynamics}
\label{app:dynamics}

To better understand how criterion-level credit assignment affects policy optimization, we compare the training dynamics of CoCA and RaR in Medicine and Science.
Specifically, we track training reward, response length, and policy entropy over $400$ training steps to characterize learning progress, generation behavior, and exploration, as shown in Figure~\ref{fig:dynamics}.

\begin{table*}[!t]
\caption{\small Ablation results (\%) for criterion selection strategies.
Arrows indicate changes relative to CoCA.}
\centering
\renewcommand{\arraystretch}{1.3}
\setlength{\tabcolsep}{4pt}
\resizebox{\textwidth}{!}{
\begin{tabular}{l@{\hspace{7pt}}|c@{\hspace{5pt}}c@{\hspace{5pt}}c@{\hspace{4pt}}>{\columncolor{white}[4pt][8pt]}c@{\hspace{12pt}}c@{\hspace{6pt}}>{\columncolor{white}[4pt][6pt]}c@{\hspace{10pt}}>{\columncolor{white}[4pt][7pt]}c@{\hspace{7pt}}|@{\hspace{7pt}}>{\columncolor{white}[7pt][4pt]}c}
\toprule
\multirow{2}{*}{\textbf{Strategy}}
& \multicolumn{4}{c@{\hspace{12pt}}}{\textbf{Medicine}}
& \multicolumn{3}{@{}c@{\hspace{7pt}}|@{\hspace{7pt}}}{\textbf{Science}}
& \multirow{2}{*}{\textbf{Avg.}} \\
\cmidrule(l{4pt}r{12pt}){2-5}
\cmidrule(l{0pt}r{14.4pt}){6-8}
& \textbf{RaR-Medicine}
& \textbf{HealthBench}
& \textbf{LLMEval-Med}
& \hspace*{-1pt}\textbf{MedQA}\hspace*{1pt}
& \textbf{RaR-Science}
& \textbf{ResearchQA}
& \hspace*{-2pt}\textbf{GPQA-D}\hspace*{2pt}
& \\
\midrule
\textbf{CoCA}
& \textbf{69.8} \basex{0.0}
& \textbf{54.3} \basex{0.0}
& \textbf{61.8} \basex{0.0}
& \textbf{77.6} \basex{0.0}
& \textbf{74.2} \basex{0.0}
& \textbf{59.6} \basex{0.0}
& \textbf{54.4} \basex{0.0}
& \textbf{64.5} \basex{0.0} \\
\midrule
Random
& 67.3 \down{2.5}
& 49.6 \down{4.7}
& 56.1 \down{5.7}
& 74.1 \down{3.5}
& 73.4 \down{0.8}
& 57.9 \down{1.7}
& 51.3 \down{3.1}
& 61.4 \down{3.1} \\
Top $|s_j|$
& 69.4 \down{0.4}
& 52.3 \down{2.0}
& 59.0 \down{2.8}
& 76.8 \down{0.8}
& 74.0 \down{0.2}
& 58.6 \down{1.0}
& 52.2 \down{2.2}
& 63.2 \down{1.3} \\
\bottomrule
\end{tabular}
}
\label{tb:ablation_select}
\end{table*}

\begin{figure}[!t]
\centering
\includegraphics[width=\linewidth]{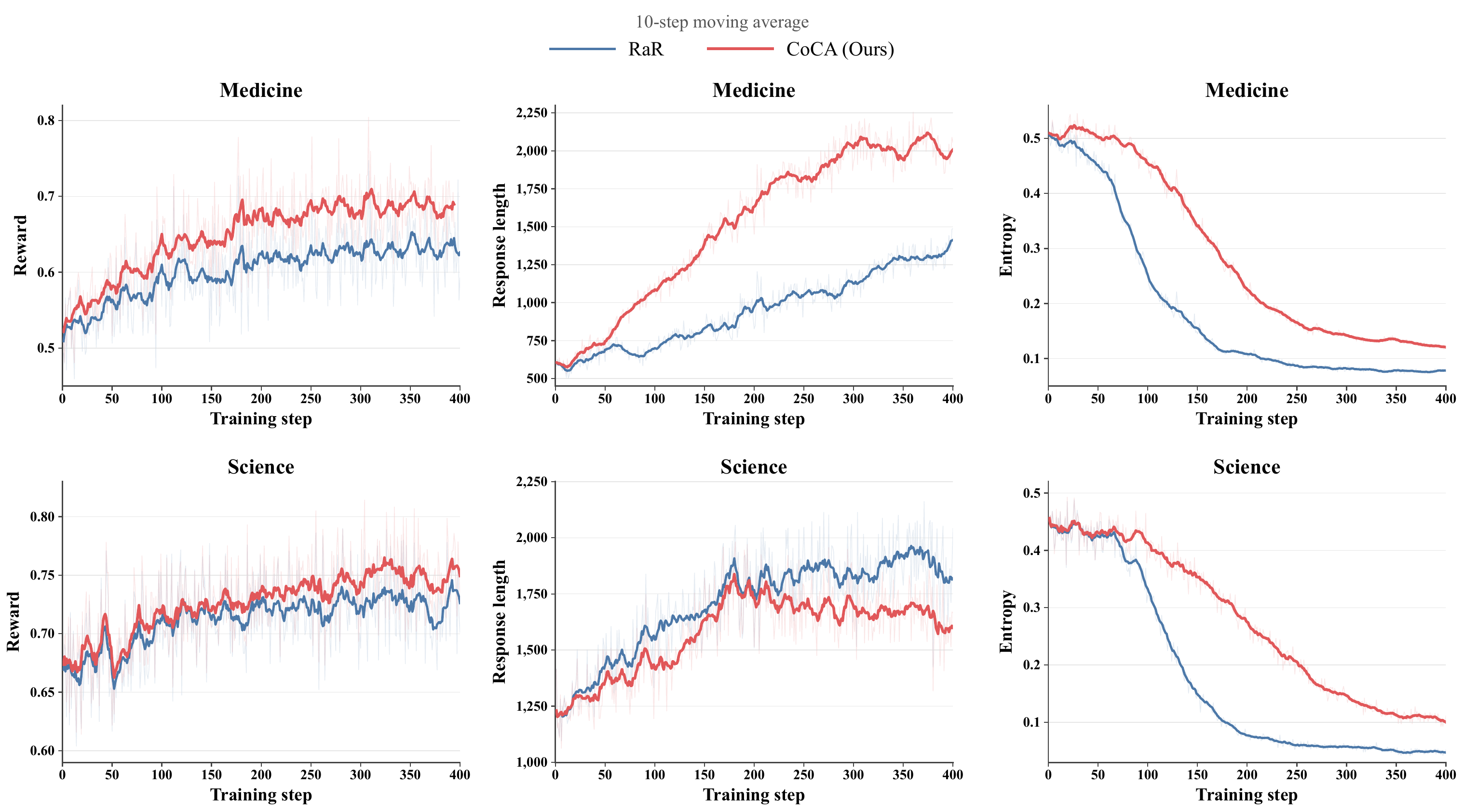}
\caption{\small Training dynamics of CoCA and RaR in Medicine (top) and Science (bottom).
Columns show reward, response length, and policy entropy, with faint stepwise curves and bold $10$-step moving averages.}
\label{fig:dynamics}
\end{figure}

\paragraph{Reward optimization.}
We first examine reward progression to assess the effectiveness of rubric-based optimization.
In Medicine, CoCA establishes a reward advantage early in training and maintains a clear gap as both methods approach a plateau.
In Science, the two methods exhibit similar initial progress, but CoCA gradually achieves higher rewards, with the difference becoming more apparent in the later stages.
This advantage persists through the end of training in both domains, suggesting that criterion-level credit assignment helps the policy make more effective use of rubric feedback and reach higher reward levels.

\paragraph{Response length.}
We additionally compare response length to assess whether the reward gains depend on producing longer outputs.
In Medicine, CoCA's responses become substantially longer than RaR's as training progresses, alongside its higher reward.
However, in Science, CoCA's response length peaks around the middle of training and then decreases, yielding shorter responses than RaR in the second half even as its reward improves further.
These contrasting trends show that CoCA can achieve higher reward with either longer or shorter responses, depending on the domain, and that its gains do not require uniformly increasing output length.

\paragraph{Exploration during training.}
Finally, we track policy entropy to characterize how exploration evolves during optimization.
In both domains, RaR exhibits a rapid entropy decline early in training, followed by a prolonged period at a low level.
CoCA's entropy decreases more gradually and remains higher through the later stages, indicating slower concentration of its token distributions.
Together with the higher rewards, this pattern suggests that CoCA supports effective reward optimization while retaining more scope for exploration.

\subsection{Effect of Criterion Selection}
\label{app:criterion-selection}

To assess the effect of criterion selection, we compare CoCA's default strategy with two alternatives, keeping the selection budget at $k=2$ and other training settings unchanged.
CoCA selects criteria with the largest absolute credits $|A_{ij}|$, whereas \textit{Random} selects criteria randomly and \textit{Top $|s_j|$} selects those with the largest absolute rubric point values.
Unselected criteria retain uniform allocation.

As shown in Table~\ref{tb:ablation_select}, \textit{Random} and \textit{Top $|s_j|$} achieve average scores of $61.4\%$ and $63.2\%$, respectively, falling below CoCA by $3.1\%$ and $1.3\%$.
Both alternatives underperform CoCA on all seven benchmarks, while \textit{Top $|s_j|$} consistently outperforms \textit{Random}.
The advantage of \textit{Top $|s_j|$} over \textit{Random} suggests that rubric point values provide a useful basis for prioritization.
CoCA further incorporates the magnitude of each criterion outcome's deviation from the rollout-group average through $|A_{ij}|$.
Its additional gains support directing the limited criterion-conditioned computation toward criteria with larger response-specific credit magnitudes.

\begin{table*}[!t]
\caption{\small Standard deviations ($\%$) of benchmark scores across three random seeds.
Avg.\ reports the standard deviation of the per-seed average score over all seven benchmarks.}
\centering
\renewcommand{\arraystretch}{1.3}
\setlength{\tabcolsep}{4pt}
\resizebox{\textwidth}{!}{
\begin{tabular}{l@{\hspace{7pt}}|c@{\hspace{5pt}}c@{\hspace{5pt}}c@{\hspace{4pt}}>{\columncolor{white}[4pt][8pt]}c@{\hspace{12pt}}c@{\hspace{6pt}}>{\columncolor{white}[4pt][6pt]}c@{\hspace{10pt}}>{\columncolor{white}[4pt][7pt]}c@{\hspace{7pt}}|@{\hspace{7pt}}>{\columncolor{white}[7pt][4pt]}c}
\toprule
\multirow{2}{*}{\textbf{Method}}
& \multicolumn{4}{c@{\hspace{12pt}}}{\textbf{Medicine}}
& \multicolumn{3}{@{}c@{\hspace{7pt}}|@{\hspace{7pt}}}{\textbf{Science}}
& \multirow{2}{*}{\textbf{Avg.}} \\
\cmidrule(l{4pt}r{12pt}){2-5}
\cmidrule(l{0pt}r{14.4pt}){6-8}
& \textbf{RaR-Medicine}
& \textbf{HealthBench}
& \textbf{LLMEval-Med}
& \hspace*{-1pt}\textbf{MedQA}\hspace*{1pt}
& \textbf{RaR-Science}
& \textbf{ResearchQA}
& \hspace*{-2pt}\textbf{GPQA-D}\hspace*{2pt}
& \\
\midrule
RaR
& 0.26 & 0.25 & 0.44 & 0.51 & 0.18 & 0.39 & 0.53 & 0.24 \\
\textbf{CoCA}
& 0.31 & 0.22 & 0.37 & 0.45 & 0.29 & 0.30 & 0.48 & 0.17 \\
\bottomrule
\end{tabular}
}
\label{tb:multi_seed}
\end{table*}

\subsection{Robustness Across Random Seeds}
\label{app:multi-seed}
To assess sensitivity to training randomness, we repeat the CoCA and RaR experiments with two additional random seeds, keeping all training and evaluation settings unchanged.
Table~\ref{tb:multi_seed} reports the standard deviations across all three runs.
CoCA's benchmark-level standard deviations range from $0.22\%$ to $0.48\%$, comparable to RaR's range of $0.18\%$ to $0.53\%$.
For the seven-benchmark average, CoCA exhibits a lower standard deviation than RaR ($0.17\%$ versus $0.24\%$).
Moreover, its $3.1\%$ average lead over the strongest baseline in Table~\ref{tb:main} far exceeds the observed variation, indicating that CoCA's SOTA average performance remains robust across the evaluated seeds.

\section{Pseudocode of CoCA}
\label{app:pseudocode}

We describe the CoCA pipeline in Algorithm~\ref{alg:coca}.

\begin{algorithm}[!ht]
\caption{CoCA: Criterion-Aware Credit Assignment}
\label{alg:coca}
\small
\linespread{1.1}\selectfont
\begin{algorithmic}[1]
\Require Dataset $\mathcal D$, policy $\pi_\theta$, judge $J$, group size $G$,
allocation budget $k$, concentration $\beta$.
\For{each training iteration}
    \State Freeze the rollout policy $\pi_{\theta_{\mathrm{old}}}\gets\pi_\theta$
    \State Sample a batch of prompt--rubric pairs from $\mathcal D$
    \For{each $(x,\mathcal R)$ in the batch}
        \State Sample $\{y_i\}_{i=1}^{G}\sim\pi_{\theta_{\mathrm{old}}}(\cdot\mid x)$
        \State Obtain criterion outcomes $\{z_{ij}\}$ using $J$
        \State $R_i\gets \sum_j s_jz_{ij}/\sum_{j:s_j>0} s_j$ for all $i$
        \State $\bar z_j\gets G^{-1}\sum_i z_{ij}$ for all $j$
        \State $A_{ij}\gets s_j(z_{ij}-\bar z_j)/[\sum_{\ell:s_\ell>0}s_\ell]$
        for all $i,j$
        \For{$i=1,\ldots,G$}
            \State $T_i\gets|y_i|$;
            $\widehat A_{i,t}\gets\sum_j A_{ij}$ for all $t$
            \State $\mathcal S_i\gets
            \operatorname{TopK}_{j:A_{ij}\neq0}(|A_{ij}|,k)$
            \State $p_{i,t}\gets
            \pi_{\theta_{\mathrm{old}}}(\cdot\mid h_{i,t})$ for all $t$
            \For{each $j\in\mathcal S_i$}
                \State $q_{ij,t}\gets
                \pi_{\theta_{\mathrm{old}}}(\cdot\mid h_{i,t}(c_j))$
                for all $t$
                \State $d_{ij,t}\gets
                \operatorname{sg}[D_{\mathrm{JS}}(p_{i,t}\|q_{ij,t})]$
                for all $t$
                \State $\boldsymbol\omega_{ij}\gets
                T_i\operatorname{softmax}(\beta\mathbf d_{ij})$
                \State $\widehat A_{i,t}\gets\widehat A_{i,t}
                +A_{ij}([\boldsymbol\omega_{ij}]_t-1)$ for all $t$
            \EndFor
        \EndFor
    \EndFor
    \State Update $\theta$ on the batch using Eq.~(\ref{eq:grpo}),
    replacing $A_i$ with fixed $\widehat A_{i,t}$
\EndFor
\State \Return $\pi_\theta$
\end{algorithmic}
\end{algorithm}

\end{document}

%% file: math_commands.tex
\usepackage{amsmath,amsfonts,bm}

\def\eqref#1{equation~\ref{#1}}

\def\1{\bm{1}}

\DeclareMathAlphabet{\mathsfit}{\encodingdefault}{\sfdefault}{m}{sl}
\SetMathAlphabet{\mathsfit}{bold}{\encodingdefault}{\sfdefault}{bx}{n}

